\let\NAT@parse\undefined
	\newtheoremstyle{myplain}% name
	  {}%      Space above, empty = `usual value'
	  {}%      Space below
	  {\itshape}% Body font
	  {}%         Indent amount (empty = no indent, \parindent = para indent)
	  {\bfseries}% Thm head font
	  {}%        Punctuation after thm head
	  {5pt plus 1pt minus 1pt}% Space after thm head: \newline = linebreak
	  {}%         Thm head spec
	\newtheoremstyle{mydefinition}% name
	  {}%      Space above, empty = `usual value'
	  {}%      Space below
	  {\normalfont}% Body font
	  {}%         Indent amount (empty = no indent, \parindent = para indent)
	  {\bfseries}% Thm head font
	  {}%        Punctuation after thm head
	  {5pt plus 1pt minus 1pt}% Space after thm head: \newline = linebreak
	  {}%         Thm head spec
	\theoremstyle{myplain}
    \newtheorem{assumption}{Assumption}
	\newtheorem{theorem}{Theorem}
	\newtheorem{lemma}{Lemma}
	\newtheorem{proposition}{Proposition}
	\newtheorem{corollary}{Corollary}
	\theoremstyle{mydefinition}
	\newtheorem{definition}{Definition}
	\newcommand{\argmin}{\operatornamewithlimits{arg\ min}} % with space
\title{\LARGE \bf
Technical Report: Sensor-Based Reactive Symbolic Planning in Partially Known Environments
}
\author{Vasileios Vasilopoulos, William Vega-Brown, Omur Arslan, Nicholas Roy, Daniel E. Koditschek% <-this % stops a space
\thanks{Vasileios Vasilopoulos is with the Department of Mechanical Engineering and Applied Mechanics, University of Pennsylvania, Philadelphia, PA 19104,
        {\tt\small vvasilo@seas.upenn.edu}}%
\thanks{William Vega-Brown and Nicholas Roy are with the Computer Science and Artificial Intelligence Laboratory, Massachusetts Institute of Technology, Cambridge, Massachusetts 02139,
        {\tt\small \{wrvb,nickroy\}@csail.mit.edu}}%
\thanks{Omur Arslan and Daniel E. Koditschek are with the Department of Electrical and Systems Engineering, University of Pennsylvania, Philadelphia, PA 19104,
        {\tt\small \{omur,kod\}@seas.upenn.edu}}
}
\begin{document}

\maketitle
\thispagestyle{empty}
\pagestyle{empty}

%%%%%%%%%%%%%%%%%%%%%%%%%%%%%%%%%%%%%%%%%%%%%%%%%%%%%%%%%%%%%%%%%%%%%%%%%%%%%%%%
\begin{abstract}
This paper considers the problem of completing assemblies of passive objects in nonconvex environments, cluttered with convex obstacles of unknown position, shape and size that satisfy a specific separation assumption. A differential drive robot equipped with a gripper and a LIDAR sensor, capable of perceiving its environment only locally, is used to position the passive objects in a desired configuration. The method combines the virtues of a deliberative planner generating high-level, symbolic commands, with the formal guarantees of convergence and obstacle avoidance of a reactive planner that requires little onboard computation and is used online. The validity of the proposed method is verified both with formal proofs and numerical simulations.
\end{abstract}

%%%%%%%%%%%%%%%%%%%%%%%%%%%%%%%%%%%%%%%%%%%%%%%%%%%%%%%%%%%%%%%%%%%%%%%%%%%%%%%%

\section{INTRODUCTION}
\label{sec:introduction}

\begin{figure}[t]
\centering
\includegraphics[width=0.9\columnwidth]{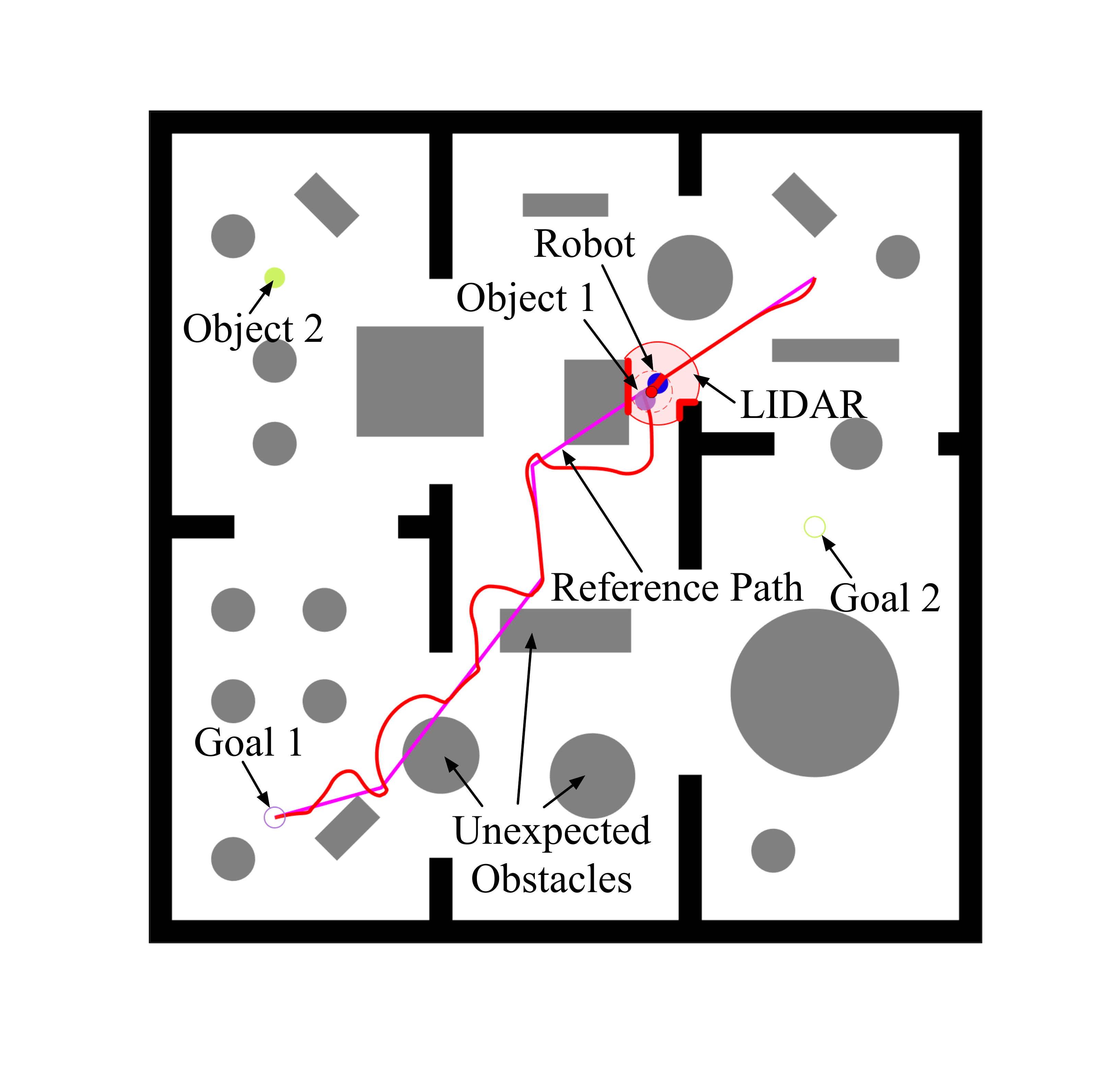}
\caption{A depiction of an intermediate stage of an assembly process. The robot is tasked to move two objects from their start to their final configuration using a gripper and a LIDAR. The deliberative planner outputs a reference path (purple) which the reactive planner has to follow, while avoiding the unexpected obstacles (grey) in the (potentially) nonconvex workspace. The resulting piecewise differentiable object trajectory for one object is shown in red.}
\label{fig:environment}
\end{figure}

In this paper, we address a very specific instance of the Warehouseman's Problem \cite{Hopcroft_Schwartz_Sharir_1984} as a challenging setting in which to advance the formal integration of deliberative and reactive modes of robot assembly planning and control. We posit a planar disk-shaped robot with velocity controlled unicycle kinematics placed in an indoor environment with known floor-plan, cluttered with convex obstacles of unknown number and placement. The robot's task is to bring a collection of  known disk-shaped objects from their initial placement to their prescribed destination by approaching, attaching and then pushing it into place, making sure to avoid any collisions with the known walls, other objects and unanticipated obstacles along the way. 

\subsection{Motivation and Related Work}
The problem of using a higher-level planner to inform subgoals of a lower-level planner has been examined previously, and we build on prior work in hybrid systems and task and motion planning. However, most work has focused on ad hoc abstractions that perform well empirically. For example, Wolfe et al. \cite{Wolfe2010} use a task hierarchy to guide the search for a low-level plan by expanding high-level plans in a best-first way. This approach guarantees hierarchical optimality: it will generate the best plan which can be represented in a given task hierarchy. Ensuring optimality has always been difficult to achieve due to computational complexity. Berenson et al \cite{Berenson2009} and  Konidaris et al \cite{Konidaris2014} use specific formulations of hierarchy without guaranteeing optimality. Kaelbling and Lozano-Perez \cite{Kaelbling2011} avoid the computational cost by committing to decisions at a high level of abstraction, before a full low-level plan is available. Vega-Brown and Roy \cite{WAFR16WRVB} provided a further step towards tractable planning that incorporated complex kinematic constraints, but no general approach exists for incorporating complex dynamics.

On the other hand, planning the rearrangement of movable objects has long been known to be algorithmically hard  (e.g., PSPACE hardness was established in \cite{Hopcroft_Schwartz_Sharir_1984 }) and a lively contemporary literature \cite{Vendittelli_Laumond_Mishra_2015,Deshpande_Kaelbling_Lozano-Perez_2016} continues to explore conditions under which the additional complexity of planning the grasps results in a deterministically undecidable problem. While that interface has been understood to be crucial for decades \cite{Chatila_1995}, the literature on reactive approaches to this problem has been far more sparse.  Purely reactive formulations and provably correct solutions to  partial \cite{Koditschek_1994} and then complete \cite{Bozma_Koditschek_2001} scalar versions of the problem motivated the empirical study \cite{Karagoz_Bozma_Koditschek_2004} of a simplified proxy of the planar problem we consider here.

%However, in the present version of the problem, the  addition of (known) walls and (unknown) fixed obstacles significantly complicates the complexity of the planning problem.

% motivating our introduction of an offline,  deliberative planner.   Specifically, the reactive planner relies on a recently developed algorithm \cite{arslan_kod_ICRA2017} for following a predefined, potentially not collision-free, reference path on the plane in a reactive way and using only local information. The algorithm can be extended to control a differential drive robot and switches between following the deliberative planner's specified path or instead following a sensed  wall, depending on whether the reference path blocked by an unanticipated.  The fact that this architecture is sufficient to yield a proof of correctness in the present case opens up the possibility that a further shift of planning responsibilities from the deliberative to the reactive component might enable formal approaches to still more useful  (i.e., both more physically realistic as well as more algorithmically challenging)  versions of the warehouseman's problem.

\subsection{Contributions}

We present a provably correct architecture (Fig. \ref{fig:control_idea}) for planning and executing a successful solution to this  Warehouseman's problem by decomposition into an offline ``deliberative'' planning module and an online ``reactive'' execution module.  The deliberative planner, adapted from the probabilistically complete (and optimal) algorithm of \cite{marthi2008angelic}, is assigned the job of finding an assembly plan, while the reactive planner accepts each next step of that planned sequence, and uses online (LIDAR-style) sensory measurements to avoid the unanticipated obstacles (as well as the known walls and objects) by switching between following the deliberative planner's specified path or instead following a sensed  wall. The wall following algorithm is guaranteed to maintain the robot distance from the wall within some specified bounds, while making progress along the wall boundary.

%taking the form of a sequence of robot traversals, grasps, pushes and releases that would  rearrange the environment as specified --- were the known objects (along with the walls of the floor plan) the only obstacles to be dealt with. 

After imposing specific constraints on how tightly packed  the unknown obstacles and the known objects' initial and final configurations can be, we prove that the hybrid control scheme generated by this reactive planner must succeed in achieving any specified step of the deliberative sequence with no collisions along the way.

\subsection{Organization of the paper}

The paper is organized as follows. Section \ref{sec:problemformulation} describes the problem and summarizes our approach. Section \ref{sec:angelic} gives a brief outline of the high-level deliberative planner that generates the sequence of appropriate symbolic commands to accomplish the task at hand, without any information about the internal obstacles. Section \ref{sec:reactive} describes the fundamental idea of reactively switching between a path following and a wall following mode, for both a holonomic and a nonholonomic robot, while Section \ref{sec:models} extends our reactive ideas to the navigation problem of a nonholonomic robot grasping a passive object and using its sensor to position it at a desired location. Section \ref{sec:symboliclanguage} combines the ideas from the previous two sections and describes the low-level, online implementation of the symbolic action command set. Section \ref{sec:simulations} presents illustrative numerical examples for the ideas presented, while Section \ref{sec:conclusion} summarizes our observations and ideas for future work.

\section{PROBLEM FORMULATION}
\label{sec:problemformulation}

In this work, we consider a first-order, nonholonomically-constrained, disk-shaped robot, centered at $\mathbf{x} \in \mathbb{R}^2$ with radius $r \in \mathbb{R}_{> 0}$ and orientation $\psi \in S^1$, using a gripper to move circular objects in a closed, compact, not necessarily convex workspace $\mathcal{W} \subset \mathbb{R}^2$ as shown in Fig.~\ref{fig:environment}, whose boundary $\partial \mathcal{W}$ is assumed to be known. The robot dynamics are described by 
\begin{equation}
(\dot{\mathbf{x}}, \dot{\psi}) = \mathbf{B}(\psi) \mathbf{u}_{ku} \label{eq:robotEOM}
\end{equation}
with $\mathbf{B}(\psi) = \begin{bmatrix}
\cos\psi & \sin\psi & 0 \\ 0 & 0 & 1
\end{bmatrix}^T$ the differential constraint matrix and $\mathbf{u}_{ku} = (v,\omega)$ the input vector\footnote{Throughout this paper, we will use the ordered set notation $(*,*,\ldots)$ and the matrix notation $\begin{bmatrix}
* & * & \ldots
\end{bmatrix}^T$ for vectors interchangeably.} consisting of a linear and an angular command. The robot is assumed to possess a LIDAR, positioned at $\mathbf{x}$, with a $360^\circ$ angular scanning range and a fixed sensing range $R \in \mathbb{R}_{>0}$ and is tasked with moving each of the $n \in \mathbb{N}$ movable disk-shaped objects, centered at $\mathbf{p}:=(\mathbf{p}_1, \mathbf{p}_2, \ldots, \mathbf{p}_n) \in \mathcal{W}^n$ with a vector of radii $(\mathbf{\rho}_1, \mathbf{\rho}_2, \ldots, \mathbf{\rho}_n) \in (\mathbb{R}_{> 0})^n$, from its initial configuration to a user-specified goal configuration $\mathbf{p}^*:=(\mathbf{p}_1^*, \mathbf{p}_2^*, \ldots, \mathbf{p}_n^*) \in \mathcal{W}^n$. We assume that both the initial configuration and the target configuration of the objects are known. In addition to the known boundary of the workspace $\partial \mathcal{W}$, the workspace is cluttered by an unknown number of fixed, disjoint, convex obstacles of unknown position and size, denoted by $\mathcal{O}:=(O_1, O_2, \ldots)$. To simplify the notation, also define $\mathcal{O}_w := \mathcal{O} \cup \partial \mathcal{W}$.

We adopt the following assumptions to guarantee that any robot-object pair can go around any obstacle in the workspace along any possible direction, introduced only to facilitate the proofs of our formal results, without being necessary for the existence of some solution to the problem. 

\begin{assumption}[Obstacle separation] \label{assumption:obstacleseparation}
The obstacles $\mathcal{O}$ in the workspace are separated from each other by clearance\footnote{Here the clearance between two sets $A$ and $B$ is defined as $d(A,B):=\inf\{\|\mathbf{a}-\mathbf{b}\| \, | \, \mathbf{a} \in A, \mathbf{b} \in B\}$} of at least $d(O_i,O_j) > 2(r + \max_k \rho_k), i \ne j$, with $k$ an index spanning the set of movable objects. They are also separated from the boundary of the (potentially nonconvex) workspace $\mathcal{W}$ by at least $d(O_i,\partial \mathcal{W}) > 2(r + \max_k \rho_k)$ for all $i$. \label{assumption:obstacle_separation}
\end{assumption}

Assumption \ref{assumption:obstacleseparation} means that there exists $\eta \in \mathbb{R}_{>0}$ such that 
\begin{equation}
\eta = \min \left\{ \min_{\substack{i,j \\ i \neq j}}d(O_i,O_j),\min\limits_i d(O_i,\partial \mathcal{W}) \right \} \label{eq:eta}
\end{equation}
and $\eta > 2(r + \max_k \rho_k)$.

Also, in order to ensure successful positioning of all the objects to their target configuration using reactive control schemes, it is convenient to impose a further constraint on how tightly packed the desired goal configuration can be. 

\begin{assumption}[Admissible object goals]
For any object $i \in \{1,\ldots, n\}$, $d(\mathbf{p}_i^*,\mathcal{O}_w) > \rho_i + 2r$. \label{assumption:admissible_goals}
\end{assumption}

The robot's gripper can either be engaged or disengaged; we will write $g=1$ when the gripper is engaged and $g=0$ when it is disengaged.

\begin{figure}[t]
\centering
\includegraphics[width=0.7\columnwidth]{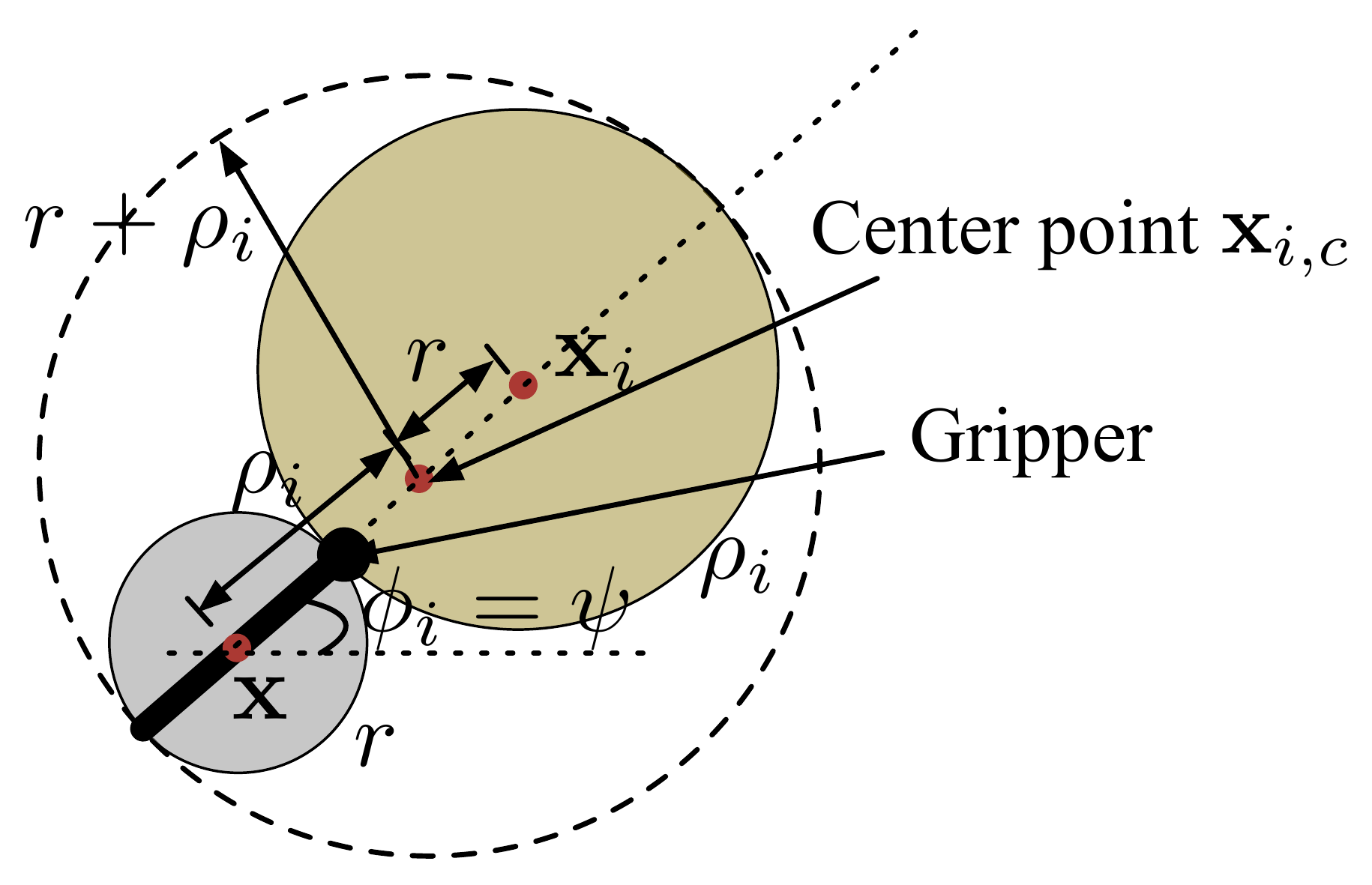}
\caption{A depiction of a disk-shaped robot with radius $r$ (grey) moving a disk-shaped object with radius $\rho_i$ (yellow).}
\label{fig:gripping_contact}
\end{figure}

In order to accomplish the task of bringing every object to its designated goal position, we endow the deliberative planner with a set of three symbolic output action commands: 
\begin{itemize}
\item $\textsc{MoveToObject}(i,\mathcal{P})$ instructing the robot to move and grasp the object $i$ along the piecewise continuously differentiable path $\mathcal{P}:[0,1] \rightarrow \mathcal{W}$ such that $\mathcal{P}(0) = \mathbf{x}$ and $\mathcal{P}(1) = \mathbf{p}_i$.
\item $\textsc{PositionObject}(i,\mathcal{P})$ instructing the robot to push the (assumed already grasped) object $i$ toward its designated goal position, $\mathbf{p}_i^*$, along the piecewise continuously differentiable path $\mathcal{P}:[0,1] \rightarrow \mathcal{W}$ such that $\mathcal{P}(0) = \mathbf{p}_i$ and $\mathcal{P}(1) = \mathbf{p}_i^*$.
\item $\textsc{Move}(\mathcal{P})$ instructing the robot to move along the piecewise continuously differentiable path $\mathcal{P}:[0,1] \rightarrow \mathcal{W}$ such that $\mathcal{P}(0) = \mathbf{x}$.
\end{itemize}
This symbolic command set, comprising the interface between the deliberative and reactive components of our planner enforces the following problem decomposition into the complementary pair:
\begin{enumerate}
\item Find a \textit{symbolic plan}, i.e a sequence of symbolic actions whose successful implementation is guaranteed to complete the task.
\item Implement each of the symbolic actions using the appropriate commands $\mathbf{u}_{ku}$ according to the robot's equations of motion shown in \eqref{eq:robotEOM}, while avoiding the perceived unanticipated by the deliberative planner obstacles.
\end{enumerate} 

\begin{figure}[t]
\centering
\includegraphics[width=1.0\columnwidth]{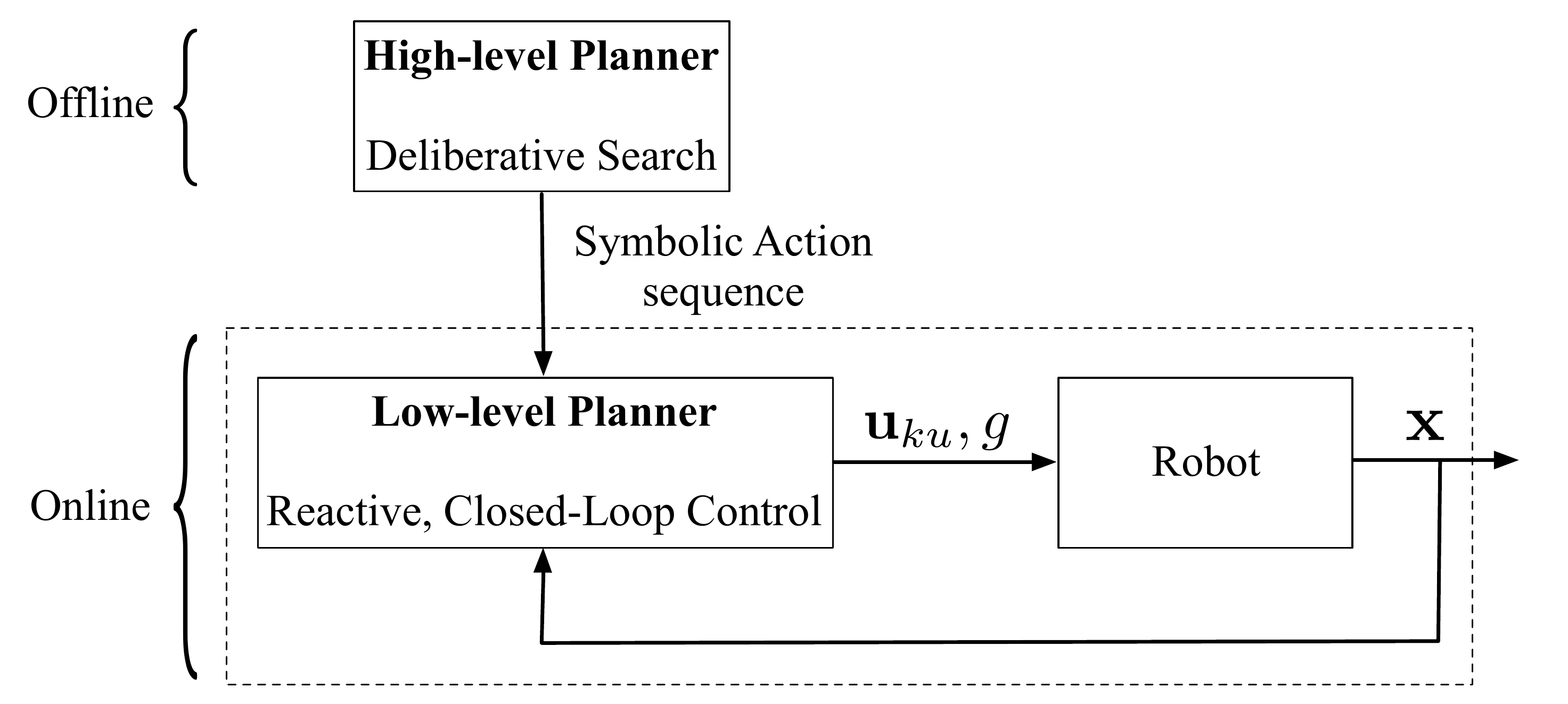}
\caption{An outline of the control approach followed in order to position the objects. A high-level, deliberative planner outputs a sequence of symbolic actions that are realized and executed sequentially in low-level using a reactive controller.}
\label{fig:control_idea}
\end{figure}

Fig. \ref{fig:control_idea} depicts this problem decomposition and the associated interface between the deliberative and reactive components of our architecture.

\section{DELIBERATIVE PLANNER}
\label{sec:angelic}

In order to obtain plans suitable for the reactive planner to track, we use a high-level planner that combines the factored orbital random geometric graph (FORGG) construction \cite{WAFR16WRVB} with the approximate angelic A* (AAA*) search algorithm \cite{vegabrown2017optimal}. 
FORGG extends the asymptotic optimality guarantees of the PRM* algorithm to problems involving discontinuous differential constraints like contact and object manipulation. 
Searching this planning graph using conventional methods like A* is computationally expensive, due to the size of search space. 
To facilitate efficient search, we employ the angelic semantics developed by Marthi~\emph{et al}.~\cite{marthi2008angelic} to encode bounds on the possible cost of sets of possible plans.
AAA* uses these bounds to guide the search, allowing large parts of the search space to be pruned away and accelerating the search for a near-optimal
high-level plan.

With this construction, the deliberative planner is supplied with the initial position and size of the robot and the objects to be placed, along with any information assumed to be known (boundary of the workspace, walls, interior obstacles etc.) and outputs a series of symbolic action commands ($\textsc{MoveToObject}$, $\textsc{PositionObject}$, $\textsc{Move}$) each associated with a collision-free path $\mathcal{P}$ in order to accomplish the task at hand.
\section{REACTIVE PLANNING FOR SINGLE ROBOTS}
\label{sec:reactive}
In this section we describe the (low-level) reactive algorithms which guarantee collision avoidance and (almost) global convergence\footnote{It is well-known that the basin of a point attractor in a non-contractible space must exclude a set of measure zero \cite{koditschek-aam-1990}.} to the plan provided by the (high-level) deliberative planner, described in Section \ref{sec:angelic}. First, we focus on the navigation problem of a single (fully actuated or nonholonomically-constrained) robot, using tools from \cite{arslan_kod_ICRA2016B} and \cite{arslan_kod_WAFR2016}, and we will show in Section \ref{sec:models} how to extend these principles for the case of gripping contact.

\subsection{Doubly-reactive planner for holonomic robots}
\label{subsec:reactiveplanner_holonomic}
First we consider a fully actuated disk-shaped robot centered at $\mathbf{x} \in \mathbb{R}^2$ with radius $r>0$, moving in a closed-convex environment (denoted by $\mathcal{W} \subset \mathbb{R}^2$) towards a goal location $\mathbf{x}^* \in \mathbb{R}^2$. Although we use a differential drive robot for our assembly problem here, we find it useful to present the basic algorithm for fully actuated robots, especially since it will be used in Section \ref{sec:models}. The robot dynamics are assumed to be described by
\begin{equation}
\dot{\mathbf{x}} = \mathbf{u}(\mathbf{x})
\end{equation}
with $\mathbf{u} \in \mathbb{R}^2$ the input. The sensory measurement of the LIDAR at $\mathbf{x} \in \mathcal{W}$ is modeled as in \cite{arslan_kod_WAFR2016} by a polar curve $\rho_\mathbf{x} : (-\pi,\pi] \rightarrow [0,R]$ as follows\footnote{See \cite{arslan_kod_WAFR2016} for a discussion on the choice of LIDAR range $R$ to avoid obstacle occlusions.}
\begin{equation}
\rho_\mathbf{x}(\theta) := \min \left( \begin{matrix}
R \\ \min \left\{ \|\mathbf{p}-\mathbf{x} \| \, | \, \mathbf{p} \in \partial \mathcal{W}, \text{atan2}(\mathbf{p}-\mathbf{x}) = \theta \right\} \\ \min\limits_{i} \left\{ \|\mathbf{p}-\mathbf{x} \| \, | \, \mathbf{p} \in O_i, \text{atan2}(\mathbf{p}-\mathbf{x}) = \theta \right\}
\end{matrix}\right)
\end{equation}
We will also use the definitions of free space $\mathcal{F}$, line-of-sight local workspace $\mathcal{LW_L}(\mathbf{x})$ and line-of-sight local free space $\mathcal{LF_L}(\mathbf{x})$ at $\mathbf{x}$ from \cite{arslan_kod_WAFR2016}.

Under the preceding definitions, it is shown in \cite{arslan_kod_WAFR2016} that the control law
\begin{equation}
\mathbf{u}(\mathbf{x}) = -k \left(\mathbf{x}-\Pi_{\mathcal{LF_L}(\mathbf{x})}(\mathbf{x}^*) \right), k \in \mathbb{R} \label{eq:law_holonomic}
\end{equation}
with $\Pi_A:\mathbb{R}^2 \rightarrow A$ denoting the projection function onto a convex subset $A \subseteq \mathbb{R}^2$, i.e
\begin{equation}
\Pi_A(\mathbf{q}) := \argmin\limits_{\mathbf{a} \in A} \|\mathbf{a}-\mathbf{q}\|
\end{equation}
asymptotically drives almost all configurations in $\mathcal{F}$ to the goal $\mathbf{x}^*$ while avoiding obstacles and not increasing the Euclidean distance to the goal along the way.

\subsection{Reactive path following}
\label{subsec:reactivepathfollowing}
For a fixed goal $\mathbf{x}^*$, the reactive control law in \eqref{eq:law_holonomic} guarantees convergence only for convex workspaces (punctured by obstacles).

Therefore, inspired by \cite{arslan_kod_ICRA2017}, we apply the idea from Section \ref{subsec:reactiveplanner_holonomic} to the problem of a robot following a navigation path $\mathcal{P}:[0,1] \rightarrow \accentset{\circ}{\mathcal{F}}$, that joins a pair of initial and final configurations $\mathbf{x}^0, \mathbf{x}^1 \in \accentset{\circ}{\mathcal{F}}$ in a potentially nonconvex workspace and lies in the interior of the free space, i.e $\mathcal{P}(0) = \mathbf{x}^0, \mathcal{P}(1) = \mathbf{x}^1$ and $\mathcal{P}(\alpha) \in \accentset{\circ}{\mathcal{F}}, \forall \alpha \in [0,1]$.

As demonstrated in \cite{arslan_kod_ICRA2017}, the \textit{projected-path goal} $\mathcal{P}(\alpha^*)$ with $\alpha^*$ determined as\footnote{Here $B(\mathbf{q},t):=\{\mathbf{p} \in \mathcal{W} \, | \, ||\mathbf{p}-\mathbf{q}|| \leq t\}$, i.e the ball of radius $t$ centered at $\mathbf{q}$.}
\begin{equation}
\alpha^* = \max\{\alpha \in [0,1] \, | \, \mathcal{P}(\alpha) \in B\left(\mathbf{x},d(\mathbf{x},\partial \mathcal{F}) \right) \} \label{eq:maxalpha}
\end{equation}
replaces $\mathbf{x}^*$ in \eqref{eq:law_holonomic} as the target goal position and is constantly updated as the agent moves along the path. Note that in the LIDAR-based setting presented here, the distance of the agent from the boundary of the free space $d(\mathbf{x},\partial \mathcal{F})$ can easily be determined as
\begin{equation}
d(\mathbf{x},\partial \mathcal{F}) = \min\limits_{\theta} \rho_\mathbf{x}(\theta) - r \label{eq:distancefromboundary}
\end{equation}

\subsection{Reactive wall following}
\label{subsec:reactivewallfollowing}
As described in Section \ref{sec:problemformulation} and shown in Fig.~\ref{fig:environment}, the path $\mathcal{P}$ might not lie in the free space since the deliberative planner is only aware of the boundary of the workspace and not of the position or size of the internal obstacles. For this reason, we present here a novel control law for reactive wall following, inspired from the ``bug algorithm'' \cite{choset_lynch_hutchinson_kantor_burgard_kavraki_thrun_2005}, that exhibits desired formal guarantees.

The wall following law is triggered by saving the current index $\alpha_s^*$ of the path $\mathcal{P}$ when the distance of the agent from the boundary of its free space, given in \eqref{eq:distancefromboundary}, drops below a small critical value $\epsilon$, i.e when $d(\mathbf{x},\partial \mathcal{F}) < \epsilon$. This would imply that the robot enters a ``danger zone'' within the vicinity of an unexpected obstacle. The goal now would be to follow the boundary of that obstacle without losing it, in order to find the path again.

Therefore, the robot first needs to select a specific direction to consistently follow the boundary of the obstacle along that direction. Since our problem is planar, there are only two possible direction choices: clockwise (CW) or counterclockwise (CCW). Also, since the robot has only local information about the obstacle based on the current LIDAR readings, a greedy selection of the wall following direction is necessary.

\begin{figure}[t]
\centering
\includegraphics[width=0.7\columnwidth]{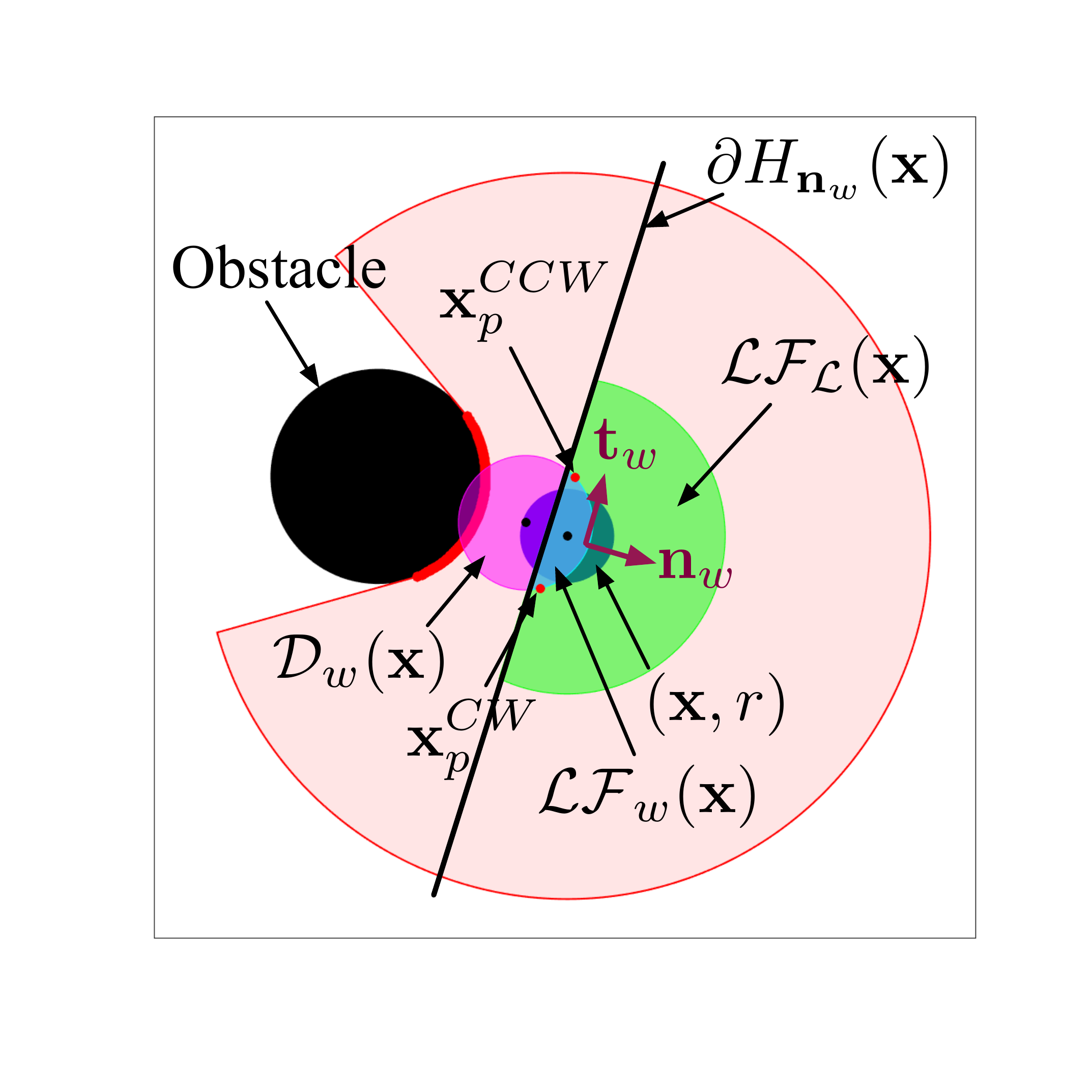}
\caption{An example of computing the wall following local free space $\mathcal{LF}_w(\mathbf{x})$ (cyan) as the intersection of the local free space $\mathcal{LF_L}(\mathbf{x})$ (green) and the offset disk $\mathcal{D}_w$ (magenta) for a robot with radius $r$ positioned at $\mathbf{x}$, encountering an obstacle within its LIDAR footprint $L_{ft}(\mathbf{x})$ (red).}
\label{fig:wallfollowing}
\end{figure}

Let $\theta_m \in (-\pi,\pi]$ be the LIDAR angle such that $\rho_\mathbf{x}(\theta_m) = \min\limits_{\theta} \rho_\mathbf{x}(\theta)$ corresponds to the minimum distance from the blocking obstacle. Let $\mathbf{n}_w(\mathbf{x}) := -(\cos\theta_m,\sin\theta_m)$ denote the normal vector to the boundary of the obstacle at the point of minimum distance and $\mathbf{t}_w(\mathbf{x}) = \mathbf{J} \, \mathbf{n}_w(\mathbf{x})$, with $\mathbf{J} := \begin{bmatrix} 0 & -1 \\ 1 & 0 \end{bmatrix}$, the corresponding tangent vector.

Our proposed method uses the inner product $\mathbf{t}_{w,0} \cdot \mathbf{t}_\mathcal{P}(\alpha_s^*)$, with $\mathbf{t}_{w,0}$ denoting the tangent vector to the boundary of the obstacle at the beginning of the wall following phase and $\mathbf{t}_\mathcal{P}(\alpha_s^*)$ the tangent vector of the path $\mathcal{P}$ at $\alpha_s^*$. Then, the value of a variable $a$ is set to 1 for CCW motion and to -1 for CW motion (fixed for all future time) according to
\begin{equation}
a= \left\{
\begin{array}{rl}
      1, & \text{if} \quad \mathbf{t}_{w,0} \cdot \mathbf{t}_\mathcal{P}(\alpha_s^*) \geq 0\\
      -1, & \text{if} \quad \mathbf{t}_{w,0} \cdot \mathbf{t}_\mathcal{P}(\alpha_s^*) < 0\\
\end{array} 
\right. \label{eq:wallfollowingdirection}
\end{equation}
since $\mathbf{t}_w(\mathbf{x})$ has counterclockwise direction around the obstacle by construction.

Define the \textit{offset disk at $\mathbf{x}$}
\begin{equation}
\mathcal{D}_w(\mathbf{x}) := \left\{\mathbf{p} \in \mathcal{W} \, | \, ||\mathbf{p}-\mathbf{x}_\mathrm{offset}(\mathbf{x})|| \leq \epsilon \right\} \label{eq:offsetdisk}
\end{equation}
with $\epsilon$ selected according to Assumption \ref{assumption:obstacle_separation} to satisfy
\begin{equation}
0 < \epsilon < \frac{1}{2}\left[\eta-2(r + \max_j \rho_j) \right] \label{eq:epsilonchoice}
\end{equation}
with $\eta$ given in \eqref{eq:eta} and
\begin{equation}
\mathbf{x}_\mathrm{offset}(\mathbf{x}) := \mathbf{x}-\left(\rho_\mathbf{x}(\theta_m)-r\right) \mathbf{n}_w(\mathbf{x})
\end{equation}
Then define the \textit{wall following local free space} $\mathcal{LF}_w(\mathbf{x})$ as \textit{at} $\mathbf{x}$
\begin{equation}
\mathcal{LF}_w(\mathbf{x}) := \mathcal{LF_L}(\mathbf{x}) \cap \mathcal{D}_w(\mathbf{x})
\end{equation}
Since $\mathcal{LF_L}(\mathbf{x})$ is convex \cite{arslan_kod_WAFR2016}$, \mathcal{LF}_w(\mathbf{x})$ is convex as the intersection of convex sets.

The \textit{wall following law} is then given as
\begin{equation}
\mathbf{u}(\mathbf{x}) = -k\left(\mathbf{x}-\mathbf{x}_p(\mathbf{x})\right) \label{eq:wallfollowinglaw}
\end{equation}
with
\begin{equation}
\mathbf{x}_p(\mathbf{x}) := \mathbf{x}_\mathrm{offset}(\mathbf{x})+\frac{\epsilon}{2}\mathbf{n}_w(\mathbf{x})+a\frac{\epsilon \sqrt{3}}{2}\mathbf{t}_w(\mathbf{x}) \label{eq:wallfollowinggoal}
\end{equation}

\begin{lemma} \label{lemma:wallfollowing}
If $\mathbf{x} \in \mathcal{F}$ and $d(\mathbf{x},\partial \mathcal{F})<\epsilon$ with $\epsilon$ chosen according to \eqref{eq:epsilonchoice}:
\begin{enumerate}[(i)]
\item The wall following free space $\mathcal{LF}_w(\mathbf{x})$ contains $\mathbf{x}$ in its interior.
\item $\mathcal{LF}_w(\mathbf{x}) = \mathcal{D}_w(\mathbf{x}) \cap H_{\mathbf{n}_w}(\mathbf{x})$ with $H_{\mathbf{n}_w}(\mathbf{x})$ the half space
\begin{equation}
H_{\mathbf{n}_w}(\mathbf{x}) = \left\{ \mathbf{p} \in \mathcal{W} \, | \, (\mathbf{p}-\mathbf{x}_h(\mathbf{x})) \cdot \mathbf{n}_w \geq 0\right\} \nonumber
\end{equation}
and $\mathbf{x}_h(\mathbf{x}) = \mathbf{x}-\frac{1}{2}(\rho_\mathbf{x}(\theta_m)-r)$.
\item The point $\mathbf{x}_p(\mathbf{x})$ lies on the boundary of $\mathcal{LF}_w(\mathbf{x})$.
\end{enumerate}
\end{lemma}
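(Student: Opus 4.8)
\emph{Proof plan.} The idea is to reduce all three items to elementary geometry of the offset disk $\mathcal{D}_w(\mathbf{x})$, using only three structural facts about the line-of-sight local free space from \cite{arslan_kod_WAFR2016}: $\mathcal{LF_L}(\mathbf{x})$ is convex; it contains $\mathbf{x}$ in its interior whenever $\mathbf{x}\in\mathcal{F}$; and it equals the intersection, over the boundary features sensed by the LIDAR, of the corresponding ``safe'' half-spaces (together with a ball of radius governed by $R$), the half-space contributed by the feature realizing $\min_\theta\rho_\mathbf{x}(\theta)$ being exactly $H_{\mathbf{n}_w}(\mathbf{x})$.

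For (i), I would note that $\|\mathbf{x}-\mathbf{x}_\mathrm{offset}(\mathbf{x})\|=\rho_\mathbf{x}(\theta_m)-r=d(\mathbf{x},\partial\mathcal{F})$ by \eqref{eq:distancefromboundary}, since $\mathbf{n}_w(\mathbf{x})$ is a unit vector and $\mathbf{x}\in\mathcal{F}$ forces $\rho_\mathbf{x}(\theta_m)\geq r$. The hypothesis $d(\mathbf{x},\partial\mathcal{F})<\epsilon$ then places $\mathbf{x}$ in the open ball $B(\mathbf{x}_\mathrm{offset}(\mathbf{x}),\epsilon)$, and $\mathbf{x}\in\mathcal{F}$ also puts $\mathbf{x}$ in the interior of $\mathcal{W}$ (the robot at $\mathbf{x}$ being collision-free means $d(\mathbf{x},\partial\mathcal{W})\geq r>0$), so $\mathbf{x}$ is interior to $\mathcal{D}_w(\mathbf{x})$; intersecting with $\mathbf{x}\in\mathrm{int}\,\mathcal{LF_L}(\mathbf{x})$ gives $\mathbf{x}\in\mathrm{int}\,\mathcal{LF}_w(\mathbf{x})$.

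The substance of the lemma is (ii), which I would prove by two inclusions. The inclusion $\mathcal{LF}_w(\mathbf{x})\subseteq\mathcal{D}_w(\mathbf{x})\cap H_{\mathbf{n}_w}(\mathbf{x})$ is immediate from $\mathcal{LF_L}(\mathbf{x})\subseteq H_{\mathbf{n}_w}(\mathbf{x})$ (the local free space lies on the robot's side of the supporting hyperplane of the nearest boundary feature). For the reverse inclusion I would first establish, by parametrizing $\partial\mathcal{D}_w(\mathbf{x})$ as $\mathbf{x}_\mathrm{offset}(\mathbf{x})+\epsilon(\cos\phi\,\mathbf{n}_w(\mathbf{x})+\sin\phi\,\mathbf{t}_w(\mathbf{x}))$ and imposing the half-space inequality, that every point of $\mathcal{D}_w(\mathbf{x})\cap H_{\mathbf{n}_w}(\mathbf{x})$ lies within distance $\epsilon$ of $\mathbf{x}$; the half-space removes precisely the far cap of the $\epsilon$-disk about $\mathbf{x}_\mathrm{offset}(\mathbf{x})$, which would otherwise reach distance $\epsilon+d(\mathbf{x},\partial\mathcal{F})$ from $\mathbf{x}$. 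I would then invoke Assumption \ref{assumption:obstacleseparation} together with the choice of $\epsilon$ in \eqref{eq:epsilonchoice}: the feature realizing $\min_\theta\rho_\mathbf{x}(\theta)$ is within $d(\mathbf{x},\partial\mathcal{F})+r<\epsilon+r$ of $\mathbf{x}$, so the triangle inequality and $\eta>2(r+\max_k\rho_k)+2\epsilon$ force every other obstacle and every other wall component to lie at distance exceeding $\epsilon$ from $\mathbf{x}$; hence none of their half-spaces cuts the ball $B(\mathbf{x},\epsilon)$, and the segment joining $\mathbf{x}$ to any point of $\mathcal{D}_w(\mathbf{x})\cap H_{\mathbf{n}_w}(\mathbf{x})$ --- which stays in $B(\mathbf{x},\epsilon)$ and on the robot's side of the nearest feature --- is unobstructed. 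Therefore $\mathcal{LF_L}(\mathbf{x})$ agrees with $H_{\mathbf{n}_w}(\mathbf{x})$ throughout $\mathcal{D}_w(\mathbf{x})$, which is the claim. The main obstacle I anticipate is exactly this bookkeeping --- converting the separation constant $\eta$ and the bound \eqref{eq:epsilonchoice} into the assertion that only the nearest feature is active inside $\mathcal{D}_w(\mathbf{x})$ --- together with matching the half-space contributed by that feature in the construction of \cite{arslan_kod_WAFR2016} to the explicit $H_{\mathbf{n}_w}(\mathbf{x})$ of the statement.

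Item (iii) then follows by direct computation. Since $\mathbf{n}_w(\mathbf{x})$ and $\mathbf{t}_w(\mathbf{x})$ are orthonormal and $a=\pm1$, $\|\mathbf{x}_p(\mathbf{x})-\mathbf{x}_\mathrm{offset}(\mathbf{x})\|^2=(\epsilon/2)^2+(\epsilon\sqrt{3}/2)^2=\epsilon^2$, so $\mathbf{x}_p(\mathbf{x})$ lies on the circle $\|\mathbf{p}-\mathbf{x}_\mathrm{offset}(\mathbf{x})\|=\epsilon$; moreover $\|\mathbf{x}_p(\mathbf{x})-\mathbf{x}\|^2=(\epsilon/2-d(\mathbf{x},\partial\mathcal{F}))^2+3\epsilon^2/4\leq\epsilon^2$, so $\mathbf{x}_p(\mathbf{x})$ is within $\epsilon$ of $\mathbf{x}$ and is reached from $\mathbf{x}$ by a segment crossing no boundary feature (by the argument used for (ii)), hence $\mathbf{x}_p(\mathbf{x})\in\mathrm{int}\,\mathcal{W}$ and $\mathbf{x}_p(\mathbf{x})\in\partial\mathcal{D}_w(\mathbf{x})$. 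On the other hand, with $\mathbf{x}_h(\mathbf{x})=\mathbf{x}-\tfrac{1}{2}(\rho_\mathbf{x}(\theta_m)-r)\mathbf{n}_w(\mathbf{x})$, one computes $(\mathbf{x}_p(\mathbf{x})-\mathbf{x}_h(\mathbf{x}))\cdot\mathbf{n}_w(\mathbf{x})=\tfrac{1}{2}(\epsilon-d(\mathbf{x},\partial\mathcal{F}))>0$, so $\mathbf{x}_p(\mathbf{x})$ is interior to $H_{\mathbf{n}_w}(\mathbf{x})$. A point lying on the boundary of one set and in the interior of another lies on the boundary of their intersection, so by (ii) $\mathbf{x}_p(\mathbf{x})\in\partial(\mathcal{D}_w(\mathbf{x})\cap H_{\mathbf{n}_w}(\mathbf{x}))=\partial\mathcal{LF}_w(\mathbf{x})$.
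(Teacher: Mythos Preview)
Your proposal is correct and follows essentially the same route as the paper's proof: for (i) the distance computation $\|\mathbf{x}-\mathbf{x}_\mathrm{offset}(\mathbf{x})\|=d(\mathbf{x},\partial\mathcal{F})<\epsilon$ combined with $\mathbf{x}\in\mathrm{int}\,\mathcal{LF_L}(\mathbf{x})$; for (ii) identifying $H_{\mathbf{n}_w}(\mathbf{x})$ as the half-space in the construction of $\mathcal{LF_L}(\mathbf{x})$ contributed by the nearest feature and using Assumption~\ref{assumption:obstacleseparation} with \eqref{eq:epsilonchoice} to rule out any other half-space cutting $\mathcal{D}_w(\mathbf{x})$; and for (iii) the direct verification $\|\mathbf{x}_p(\mathbf{x})-\mathbf{x}_\mathrm{offset}(\mathbf{x})\|=\epsilon$ together with $(\mathbf{x}_p(\mathbf{x})-\mathbf{x}_h(\mathbf{x}))\cdot\mathbf{n}_w(\mathbf{x})=\tfrac{1}{2}(\epsilon-d(\mathbf{x},\partial\mathcal{F}))>0$. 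Your intermediate step in (ii), showing $\mathcal{D}_w(\mathbf{x})\cap H_{\mathbf{n}_w}(\mathbf{x})\subseteq B(\mathbf{x},\epsilon)$ before invoking separation, and your care in (iii) about $\mathbf{x}_p(\mathbf{x})\in\mathrm{int}\,\mathcal{W}$, are more explicit than the paper but do not constitute a different argument.
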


\begin{proof}
\begin{enumerate}[(i)]
\item We know that $\mathbf{x}$ lies in the interior of $\mathcal{LF_L}(\mathbf{x})$ by construction. We also see that $||\mathbf{x}-\mathbf{x}_\mathrm{offset}(\mathbf{x})||=|\rho_\mathbf{x}(\theta_m)-r|=\rho_\mathbf{x}(\theta_m)-r=d(\mathbf{x},\partial \mathcal{F})<\epsilon$. Therefore, $\mathbf{x}$ also lies in the interior of $\mathcal{D}_w(\mathbf{x})$ and, hence, in the interior of $\mathcal{LF}_w(\mathbf{x})$.
\item From the construction of $\mathcal{LF_L}(\mathbf{x})$, we know that $d(\mathbf{x},\partial \mathcal{LF_L}(\mathbf{x})) = \frac{1}{2}(\rho_\mathbf{x}(\theta_m)-r)$. Therefore, in fact, $H_{\mathbf{n}_w}(\mathbf{x})$ is one the half spaces whose intersection constructs $\mathcal{LF_L}(\mathbf{x})$. Its generating hyperplane corresponds to the obstacle $O_k \in \mathcal{O}$ of minimum distance from $\mathbf{x}$ and intersects $\mathcal{D}_w(\mathbf{x})$ at two points, as can be shown by simple substitution in \eqref{eq:offsetdisk}. On the other hand, Assumption \ref{assumption:obstacle_separation} and the choice of $\epsilon$ in \eqref{eq:epsilonchoice}, show that this is the only hyperplane that intersects $\mathcal{D}_w(\mathbf{x})$ and belongs to the boundary of $\mathcal{LF_L}(\mathbf{x})$ and this concludes the proof.
\item Since $\mathbf{t}_w(\mathbf{x}) = \mathbf{J} \, \mathbf{n}_w(\mathbf{x})$, it is not hard to verify that $||\mathbf{x}_p(\mathbf{x})-\mathbf{x}_\mathrm{offset}(\mathbf{x})||=\epsilon$, which shows that $\mathbf{x}_p(\mathbf{x})$ lies on the boundary of $\mathcal{D}_w(\mathbf{x})$. Since $(\mathbf{x}_p(\mathbf{x})-\mathbf{x}_h(\mathbf{x})) \cdot \mathbf{n}_w(\mathbf{x}) = \frac{1}{2}\left[\epsilon-(\rho_\mathbf{x}-r)\right]=\frac{1}{2}(\epsilon-d(\mathbf{x},\partial \mathcal{F}))>0$, we get that $\mathbf{x}_p(\mathbf{x})$ belongs to the half space $H_{\mathbf{n}_w}(\mathbf{x})$. Since in (ii) we proved that $\mathcal{LF}_w(\mathbf{x}) = \mathcal{D}_w(\mathbf{x}) \cap H_{\mathbf{n}_w}(\mathbf{x})$, we conclude that $\mathbf{x}_p(\mathbf{x})$ lies on the boundary of $\mathcal{LF}_w(\mathbf{x})$.
\end{enumerate}
\end{proof}

\begin{proposition} \label{proposition:wallfollowing_properties}
With the choice of $\epsilon$ in \eqref{eq:epsilonchoice}, the wall following law in \eqref{eq:wallfollowinglaw} has the following properties:
\begin{enumerate}[(i)]
\item It is piecewise continuously differentiable.
\item It generates a unique continuously differentiable flow, defined for all future time.
\item It has no stationary points.
\item The free space $\mathcal{F}$ is positively invariant under its flow.
\item Moreover, the set $\left\{ \mathbf{p} \in \mathcal{W} \, \Big | \, \frac{\epsilon}{2} < d(\mathbf{p},\partial \mathcal{F}) < \epsilon \right\}$ is positively invariant under its flow.
\end{enumerate}
\end{proposition}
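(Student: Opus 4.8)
The plan is to establish the five properties in order, leaning heavily on Lemma \ref{lemma:wallfollowing} for the geometric description of $\mathcal{LF}_w(\mathbf{x})$, and on the structural similarity between \eqref{eq:wallfollowinglaw} and the convex-projection control law \eqref{eq:law_holonomic}, whose regularity properties are already known from \cite{arslan_kod_WAFR2016}. For (i), I would argue that $\theta_m$, $\mathbf{n}_w(\mathbf{x})$, $\mathbf{x}_\mathrm{offset}(\mathbf{x})$ and hence $\mathbf{x}_p(\mathbf{x})$ depend continuously differentiably on $\mathbf{x}$ away from a measure-zero set of configurations where the minimizing LIDAR direction $\theta_m$ is non-unique (equidistant from two obstacle faces, or a corner of $\partial\mathcal{W}$); Assumption \ref{assumption:obstacle_separation} together with the restriction $d(\mathbf{x},\partial\mathcal{F})<\epsilon$ and the choice \eqref{eq:epsilonchoice} guarantees that within the wall-following band only a single obstacle is within the relevant range, so the only breaks occur at the isolated switching instants, giving piecewise $C^1$ regularity of $\mathbf{u}$.

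For (ii), on each piece $\mathbf{u}$ is $C^1$, so local existence and uniqueness of the flow is standard; to get completeness (defined for all future time) I would show the vector field is bounded — indeed $\|\mathbf{x}-\mathbf{x}_p(\mathbf{x})\|$ is bounded by a constant multiple of $\epsilon$ by Lemma \ref{lemma:wallfollowing}(iii) — and that trajectories cannot escape $\mathcal{W}$, which is exactly property (iv), so no finite-time blow-up occurs; continuous differentiability of the flow across switching times follows because the switches are triggered by transversal crossings of the level set $d(\mathbf{x},\partial\mathcal{F})=\epsilon$. For (iii), I would compute that a stationary point requires $\mathbf{x}=\mathbf{x}_p(\mathbf{x})$, but $\|\mathbf{x}-\mathbf{x}_\mathrm{offset}(\mathbf{x})\|=\rho_\mathbf{x}(\theta_m)-r=d(\mathbf{x},\partial\mathcal{F})<\epsilon$ while $\|\mathbf{x}_p(\mathbf{x})-\mathbf{x}_\mathrm{offset}(\mathbf{x})\|=\epsilon$ by Lemma \ref{lemma:wallfollowing}(iii), a contradiction; hence the field never vanishes.

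For (iv) and (v), the key observation is that the target $\mathbf{x}_p(\mathbf{x})$ always lies in the convex set $\mathcal{LF}_w(\mathbf{x})\subseteq\mathcal{LF_L}(\mathbf{x})\subseteq\mathcal{F}$, and that $\mathbf{x}$ lies in its interior by Lemma \ref{lemma:wallfollowing}(i); since \eqref{eq:wallfollowinglaw} points from $\mathbf{x}$ toward $\mathbf{x}_p(\mathbf{x})$, the field points into $\mathcal{LF}_w(\mathbf{x})$, hence into $\mathcal{F}$, so $\mathcal{F}$ is positively invariant (this mirrors the invariance argument for \eqref{eq:law_holonomic} in \cite{arslan_kod_WAFR2016}). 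For (v) I would check the two boundaries of the band $\{\frac{\epsilon}{2}<d(\mathbf{p},\partial\mathcal{F})<\epsilon\}$ separately: on the outer boundary $d=\epsilon$, a short computation using $\mathbf{x}_p-\mathbf{x}_h$ along $\mathbf{n}_w$ (as in the proof of Lemma \ref{lemma:wallfollowing}(iii)) shows the flow has a component decreasing $d$, i.e. pointing inward; on the inner boundary $d=\frac{\epsilon}{2}$, the $\frac{\epsilon}{2}\mathbf{n}_w$ term in \eqref{eq:wallfollowinggoal} ensures $\mathbf{x}_p$ sits at distance exactly $\frac{\epsilon}{2}$ from the obstacle face (so strictly farther than $\mathbf{x}$), pushing the flow away from the obstacle and back into the band. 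I expect the main obstacle to be a careful treatment of part (i)–(ii): precisely characterizing the switching set where $\theta_m$ jumps or where the band is entered/exited, and verifying that these crossings are transversal so that the concatenated flow is genuinely $C^1$ and complete rather than merely piecewise-defined; the invariance claims (iv)–(v), by contrast, reduce to the convex-geometry facts already packaged in Lemma \ref{lemma:wallfollowing}.
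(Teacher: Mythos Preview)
Your overall strategy---use Lemma \ref{lemma:wallfollowing} for the convex geometry and then check the boundary behavior of the band for invariance---is exactly the route the paper takes. A few points of execution differ, and two of them are worth correcting.

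For (i)--(ii), the paper's argument is cleaner and your version has a small confusion. The paper observes that $\rho_\mathbf{x}(\theta_m)=\|\mathbf{x}-\Pi_{O_j}(\mathbf{x})\|$ and $\mathbf{n}_w(\mathbf{x})=(\mathbf{x}-\Pi_{O_j}(\mathbf{x}))/\|\mathbf{x}-\Pi_{O_j}(\mathbf{x})\|$, and then invokes the known fact that metric projection onto a closed convex set is piecewise $C^1$; this immediately gives piecewise $C^1$ regularity of $\mathbf{u}$, hence local (and, on the compact domain, global) Lipschitz continuity, from which existence, uniqueness and regularity of the flow follow directly. Your appeal to ``transversal crossings of the level set $d(\mathbf{x},\partial\mathcal{F})=\epsilon$'' is misplaced: that level set is the guard between the path-following and wall-following \emph{modes}, not the locus of non-smoothness \emph{within} the wall-following field; the breaks in $\mathbf{u}$ come from the projection onto the single tracked obstacle (e.g.\ at its corners), and transversality of a switching surface would in any case give a Lipschitz flow, not a $C^1$ one.

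For (iii) your distance argument ($\|\mathbf{x}-\mathbf{x}_\mathrm{offset}\|<\epsilon=\|\mathbf{x}_p-\mathbf{x}_\mathrm{offset}\|$) is valid and slightly different from the paper's, which simply reads off the tangential component $a\frac{\epsilon\sqrt{3}}{2}\neq 0$ from the expanded form $\mathbf{u}(\mathbf{x})=k\bigl[(\tfrac{\epsilon}{2}+r-\rho_\mathbf{x}(\theta_m))\mathbf{n}_w+a\tfrac{\epsilon\sqrt{3}}{2}\mathbf{t}_w\bigr]$.

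For (v), your treatment of the inner boundary $d=\tfrac{\epsilon}{2}$ is not quite right. At $d(\mathbf{x},\partial\mathcal{F})=\rho_\mathbf{x}(\theta_m)-r=\tfrac{\epsilon}{2}$ the normal component of $\mathbf{x}_p-\mathbf{x}$ vanishes (it equals $(\tfrac{\epsilon}{2}-d)\mathbf{n}_w$), so $\mathbf{x}_p$ is \emph{not} strictly farther from the obstacle than $\mathbf{x}$; rather $\mathbf{u}(\mathbf{x})\parallel\mathbf{t}_w(\mathbf{x})$ there, i.e.\ the flow is tangent to the inner boundary. This is precisely what the paper records, and it is still enough for positive invariance, but your ``pushing away'' picture overstates the mechanism.
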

\begin{proof}
\begin{enumerate}[(i)]
\item Let $O_j \in \mathcal{O}$ denote the obstacle which the robot follows. Since $\rho_\mathbf{x}(\theta_m)$ corresponds to the minimum distance of $\mathbf{x}$ from $O_j$, we can write
\begin{align}
\rho_\mathbf{x}(\theta_m) = & \, ||\mathbf{x} - \Pi_{O_j}(\mathbf{x})|| \nonumber \\
\mathbf{n}_w(\mathbf{x}) = & \frac{\mathbf{x} - \Pi_{O_j}(\mathbf{x})}{||\mathbf{x} - \Pi_{O_j}(\mathbf{x})||} \nonumber
\end{align}
Since metric projections onto closed convex sets (such as $O_j$) are known to be piecewise continuously differentiable \cite{Kuntz-1994,shapiro-1988}, we conclude that both $\rho_\mathbf{x}(\theta_m)$ and $\mathbf{n}_w(\mathbf{x})$ are piecewise continuously differentiable functions of $\mathbf{x}$. Now from \eqref{eq:wallfollowinggoal} and since $\mathbf{n}_w(\mathbf{x}) = \mathbf{J} \, \mathbf{t}_w(\mathbf{x})$ we can write
\begin{equation}
\mathbf{u}(\mathbf{x}) = k\left[\left(\frac{\epsilon}{2}+r-\rho_\mathbf{x}(\theta_m) \right) \mathbf{n}_w(\mathbf{x}) + a \frac{\epsilon \sqrt{3}}{2}\mathbf{J} \mathbf{n}_w(\mathbf{x}) \right] \label{eq:wallfollowinglawsimplified}
\end{equation}
Therefore, we conclude that the wall following law $\mathbf{u}(\mathbf{x})$ is piecewise continuously differentiable as a composition of piecewise continuously differentiable functions.
\item Since piecewise continuously differentiable functions are also locally Lipschitz \cite{chaney-1990}, and since locally Lipschitz functions defined on a compact domain are also globally Lipschitz, we conclude that $\mathbf{u}(\mathbf{x})$ is Lipschitz continuous using (i). The existence, uniqueness and continuous differentiability of its flow follow directly from this property.
\item This follows directly from the form of the wall following law in \eqref{eq:wallfollowinglawsimplified}, since the coefficient corresponding to $\mathbf{t}_w(\mathbf{x}) = \mathbf{J} \, \mathbf{n}_w(\mathbf{x})$ can never be zero.
\item From Lemma \ref{lemma:wallfollowing}, we know that for any $\mathbf{x} \in \mathcal{F}$, the wall following local free space $\mathcal{LF}_w(\mathbf{x})$ is a closed convex subset of $\mathcal{F}$, which is collision-free (as a subset of $\mathcal{LF_L}(\mathbf{x})$) and contains both $\mathbf{x}$ and $\mathbf{x}_p(\mathbf{x})$. Hence, $-k(\mathbf{x}-\mathbf{x}_p(\mathbf{x})) \in T_\mathbf{x} \mathcal{F}$ is either interior directed or at worst tangent to the boundary of $\mathcal{F}$ and this concludes the proof. 
\item Similarly, we see that for any $\mathbf{x} \in \mathcal{F}$ satisfying $d(\mathbf{p},\partial \mathcal{F})= \epsilon$, the choice of $\epsilon$ in \eqref{eq:epsilonchoice} implies that there is a unique obstacle $j = \arg \min\limits_i d(\mathbf{x},O_i)$ such that $-k(\mathbf{x}-\mathbf{x}_p(\mathbf{x})) \in T_\mathbf{x} \mathcal{F}$ is interior directed to the set $\left\{ \mathbf{p} \in \mathbb{R}^2 \, | \, d(\mathbf{p}, O_j) < \epsilon \right\}$ and, hence, interior directed to the set $\left\{ \mathbf{p} \in \mathbb{R}^2 \, | \, d(\mathbf{p}, \partial \mathcal{F}) < \epsilon \right\}$. Similar reasoning leads to the conclusion that $\left\{ \mathbf{p} \in \mathbb{R}^2 \, \Big | \, d(\mathbf{p}, \partial \mathcal{F}) > \frac{\epsilon}{2} \right\}$ is positively invariant under the wall following law, since for $d(\mathbf{x},\partial \mathcal{F}) = \rho_\mathbf{x}(\theta_m)-r = \frac{\epsilon}{2}$, \eqref{eq:wallfollowinglawsimplified} gives $\mathbf{u}(\mathbf{x}) \parallel \mathbf{t}_w(\mathbf{x})$ and this concludes the proof.
\end{enumerate}
\end{proof}

We find it useful to include the following definition
\begin{definition}
The \textit{rate of progress} along the boundary of the observed obstacle at $\mathbf{x}$ is defined as
\begin{equation}
\sigma(\mathbf{x}) := \frac{\mathbf{u}(\mathbf{x}) \cdot \mathbf{t}_w(\mathbf{x})}{||\mathbf{u}(\mathbf{x})||}
\end{equation}
\end{definition}

By combining all these results, we arrive at the following Theorem:
\begin{theorem} \label{theorem:wallfollowing}
With a selection of $\epsilon$ as in \eqref{eq:epsilonchoice}, the wall following law in \eqref{eq:wallfollowinglaw} has no stationary points, leaves the robot's free space $\mathcal{F}$ positively invariant under its unique continuously differentiable flow, and steers the robot along the boundary of a unique obstacle in $\mathcal{O}$ in a clockwise or counterclockwise fashion (according to the selection of a in \eqref{eq:wallfollowingdirection}) with a nonzero rate of progress $\sigma$, while maintaining a distance of at most $(r+\epsilon)$ and no less than $\left( r + \frac{\epsilon}{2}\right)$ from it.
\end{theorem}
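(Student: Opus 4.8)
The plan is to obtain the Theorem by assembling facts already established in \reflem{lemma:wallfollowing} and \refprop{proposition:wallfollowing_properties}, supplying only two short additional arguments (the rate-of-progress computation and the persistence of the followed obstacle). Three of the five assertions are immediate restatements: the absence of stationary points is \refprop{proposition:wallfollowing_properties}(iii); the existence of a unique, continuously differentiable flow defined for all future time is \refprop{proposition:wallfollowing_properties}(ii); and positive invariance of $\mathcal{F}$ is \refprop{proposition:wallfollowing_properties}(iv).

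For the distance bounds I would invoke the identity $d(\mathbf{x},\partial\mathcal{F}) = \rho_\mathbf{x}(\theta_m)-r$ obtained in the proof of \reflem{lemma:wallfollowing}(i), so that the Euclidean distance of the robot center from the followed obstacle is exactly $\rho_\mathbf{x}(\theta_m) = d(\mathbf{x},\partial\mathcal{F})+r$. Wall following is entered precisely when $d(\mathbf{x},\partial\mathcal{F})$ first drops below $\epsilon$, hence at a configuration with $d(\mathbf{x},\partial\mathcal{F}) \in (\epsilon/2,\epsilon)$ (it has just crossed $\epsilon$ from above and cannot simultaneously have reached $\epsilon/2$). \refprop{proposition:wallfollowing_properties}(v) then confines the trajectory to the strip $\{\epsilon/2 < d(\cdot,\partial\mathcal{F}) < \epsilon\}$, which says exactly that the robot center stays at distance between $r+\epsilon/2$ and $r+\epsilon$ from the obstacle.

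For the ``nonzero rate of progress'' claim I would dot the simplified form \eqref{eq:wallfollowinglawsimplified} of $\mathbf{u}(\mathbf{x})$ with the unit tangent $\mathbf{t}_w(\mathbf{x}) = \mathbf{J}\,\mathbf{n}_w(\mathbf{x})$: since $\mathbf{n}_w(\mathbf{x}) \cdot \mathbf{t}_w(\mathbf{x}) = 0$ and $\|\mathbf{t}_w(\mathbf{x})\| = 1$, the radial term drops out and $\mathbf{u}(\mathbf{x}) \cdot \mathbf{t}_w(\mathbf{x}) = k\,a\,\epsilon\sqrt{3}/2$, so $\sigma(\mathbf{x}) = k\,a\,\epsilon\sqrt{3}/(2\|\mathbf{u}(\mathbf{x})\|)$ is never zero and has sign fixed by $a$; on the compact workspace $\|\mathbf{u}(\mathbf{x})\|$ is bounded, so $\sigma$ is in fact bounded away from zero. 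This establishes monotone progress along the obstacle boundary in the CW or CCW sense dictated by \eqref{eq:wallfollowingdirection}. Finally, uniqueness and persistence of the followed obstacle: \reflem{lemma:wallfollowing}(ii) together with \refas{assumption:obstacle_separation} and the choice \eqref{eq:epsilonchoice} of $\epsilon$ imply that on the invariant strip above there is a single obstacle realizing $\min_i d(\mathbf{x},O_i)$, since two distinct obstacles both within distance $r+\epsilon$ of the robot center would force $d(O_i,O_j)\le 2(r+\epsilon) < \eta$, contradicting \eqref{eq:eta}; on that strip the minimizing index is therefore well-defined and locally constant, and continuity of the flow forces it to remain constant along the trajectory. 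Combining the four points completes the proof.

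The step I expect to demand the most care is this last one — making obstacle persistence precise: one must check that the set on which the minimizer of $d(\mathbf{x},O_i)$ is unique and strictly separated from the competing obstacles is exactly the positively invariant strip of \refprop{proposition:wallfollowing_properties}(v), and that the wall following mode is entered strictly inside that strip (so that the index is locally constant at the entry configuration). Everything else is essentially a bookkeeping assembly of the preceding Lemma and Proposition.
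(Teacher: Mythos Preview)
Your proposal is correct and follows essentially the same route as the paper: assemble \refprop{proposition:wallfollowing_properties}(ii)--(v) for flow, invariance, and the distance strip, compute $\mathbf{u}\cdot\mathbf{t}_w$ from \eqref{eq:wallfollowinglawsimplified} to get the sign-definite tangential component, and then argue obstacle persistence. The only packaging difference is that the paper isolates the persistence step into a separate auxiliary result (\refprop{proposition:nochangingobstacle}) proved via the Intermediate Value Theorem, whereas you argue it inline via the triangle inequality and local constancy of the minimizing index on the invariant strip; both arguments are equivalent and equally elementary.
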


In order to prove the theorem, we will make use of the following Proposition.
\begin{proposition} \label{proposition:nochangingobstacle}
Let $\mathbf{x}^t$ denote the robot position at time $t$, with $t=0$ corresponding to the beginning of the wall following phase. Suppose that the flow $\mathbf{x}^t$ is continuous, $k = \arg \min\limits_i d(\mathbf{x}^0,O_i)$ with $O_i \in \mathcal{O}$ and $\epsilon$ satisfies \eqref{eq:epsilonchoice}. Then $d(\mathbf{x}^t, \partial \mathcal{F}) < \epsilon$ implies $k = \arg \min\limits_i d(\mathbf{x}^t,O_i)$ for all $t>0$.
\end{proposition}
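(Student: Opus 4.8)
The plan is to show that the element of $\mathcal{O}_w=\mathcal{O}\cup\partial\mathcal{W}$ nearest to the robot cannot change while the robot stays in the danger zone. I would fix $t_1>0$ and work under the operative form of the hypothesis --- the flow is continuous on $[0,t_1]$ and $d(\mathbf{x}^s,\partial\mathcal{F})<\epsilon$ for \emph{every} $s\in[0,t_1]$. The interval version is essential: a robot allowed to leave the danger zone and re-enter it near a different obstacle could obviously change its nearest obstacle. Recalling from \eqref{eq:distancefromboundary} that $d(\mathbf{x},\partial\mathcal{F})=d(\mathbf{x},\mathcal{O}_w)-r$ when $\mathbf{x}$ is near an obstacle (the LIDAR range $R$ does not bind), the hypothesis reads $d(\mathbf{x}^s,\mathcal{O}_w)<r+\epsilon$ on $[0,t_1]$; obstacle separation and compactness of $\mathcal{W}$ make the relevant minima attained, and at $t=0$ the geometric claim below identifies $O_k$ as the unique element of $\mathcal{O}_w$ within $r+\epsilon$ of the robot.

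The heart of the argument is the geometric claim that \emph{whenever $d(\mathbf{x},\partial\mathcal{F})<\epsilon$, at most one element of $\mathcal{O}_w$ lies within distance $r+\epsilon$ of $\mathbf{x}$}. I would prove it by contradiction: if two distinct $A,B\in\mathcal{O}_w$ both satisfied $d(\mathbf{x},A)<r+\epsilon$ and $d(\mathbf{x},B)<r+\epsilon$, then the triangle inequality for the clearance gives $d(A,B)\le d(\mathbf{x},A)+d(\mathbf{x},B)<2(r+\epsilon)$, whereas \refas{assumption:obstacleseparation} and \eqref{eq:eta} give $d(A,B)\ge\eta$, and the choice of $\epsilon$ in \eqref{eq:epsilonchoice} yields $\eta>2(r+\max_k\rho_k)+2\epsilon>2(r+\epsilon)$ --- a contradiction. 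It follows that while $d(\mathbf{x},\partial\mathcal{F})<\epsilon$ the nearest element of $\mathcal{O}_w$ is unique and separated from the rest by a fixed margin: its distance to $\mathbf{x}$ is strictly below $r+\epsilon$ while every other element is at distance at least $r+\epsilon$. In particular, a tie between $O_k$ and a different element cannot occur while $d(\mathbf{x},\partial\mathcal{F})<\epsilon$, since at such a tie the common distance would equal $d(\mathbf{x},\mathcal{O}_w)<r+\epsilon$ and be realized twice.

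Finally I would close the argument with a continuity/IVT step. Since the flow is continuous (\refprop{proposition:wallfollowing_properties}(ii)) and $\mathbf{p}\mapsto d(\mathbf{p},A)$ is $1$-Lipschitz, the function $g(t):=d(\mathbf{x}^t,O_k)-\min\{d(\mathbf{x}^t,A)\mid A\in\mathcal{O}_w,\ A\ne O_k\}$ is continuous on $[0,t_1]$, and $g(0)<0$ by the margin just established. If $O_k$ were not the argmin at $t_1$, then --- ties being excluded --- some $A\ne O_k$ is strictly closer there, so $g(t_1)>0$, and the intermediate value theorem furnishes $t^*\in(0,t_1)$ with $g(t^*)=0$, i.e.\ $d(\mathbf{x}^{t^*},O_k)=\min_{A\ne O_k}d(\mathbf{x}^{t^*},A)=d(\mathbf{x}^{t^*},\mathcal{O}_w)<r+\epsilon$; but then $O_k$ and the minimizing $A$ are two distinct elements of $\mathcal{O}_w$ within $r+\epsilon$ of $\mathbf{x}^{t^*}$, contradicting the geometric claim. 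Hence $k=\arg\min_i d(\mathbf{x}^{t_1},O_i)$, which is the assertion. I expect the main obstacle to be the geometric claim --- in particular, reducing $d(\mathbf{x},\partial\mathcal{F})$ to the distance to $\mathcal{O}_w$ and squeezing the separation constant $\eta$ against $2(r+\epsilon)$ via \eqref{eq:epsilonchoice} --- together with the minor but easy-to-overlook point that the hypothesis must hold on the whole interval $[0,t_1]$, not merely at its endpoints.
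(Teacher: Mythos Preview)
Your proposal is correct and follows essentially the same strategy as the paper: argue by contradiction, exploit the separation bound via the triangle inequality, and apply the intermediate value theorem to a continuous function of the flow to locate a time at which the geometry is violated. The only difference is packaging --- you isolate the ``at most one element of $\mathcal{O}_w$ within $r+\epsilon$'' fact as a separate lemma and apply IVT to the gap $g(t)$, whereas the paper applies IVT directly to $t\mapsto d(\mathbf{x}^t,O_k)$ and obtains the contradiction by exhibiting an intermediate time at which $d(\mathbf{x}^{t_m},\partial\mathcal{F})=\epsilon$.
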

\begin{proof}
Suppose this is not true. Then there exists $t_c>0$ such that $l = \arg \min\limits_i d(\mathbf{x}^{t_c},O_i) \neq k$ and $d(\mathbf{x}^{t_c}, \partial \mathcal{F}) < \epsilon$.\footnote{Here we slightly abuse the notation, since $O_l$ could as well correspond to the boundary of the workspace $\partial \mathcal{W}$. The analysis still holds, because of the particular bounds provided for $\epsilon$ in \eqref{eq:epsilonchoice}.} This implies that $d(\mathbf{x}^{t_c}, \partial \mathcal{F}) = d(\mathbf{x}^{t_c}, O_l)-r$, which gives $d(\mathbf{x}^{t_c}, O_l) < r + \epsilon < \frac{1}{2}d(O_k,O_l)$ from the choice of $\epsilon$. Hence, from the triangle inequality, we have
\begin{align*}
d(\mathbf{x}^{t_c},O_k) & \geq d(O_k,O_l)-d(\mathbf{x}^{t_c},O_l) \\
& > \frac{1}{2}d(O_k,O_l) \\
& \geq \frac{1}{2}\eta \\
& > (r + \max_j \rho_j) + \epsilon \\
& > r + \epsilon
\end{align*}
from \eqref{eq:epsilonchoice}. Therefore, we get that $d(\mathbf{x}^{t_c},O_k) > r + \epsilon$. From the assumptions, we have $d(\mathbf{x}^0,\partial \mathcal{F}) = d(\mathbf{x}^0,O_k) - r < \epsilon$, which implies $d(\mathbf{x}^0,O_k) < r + \epsilon$. Since $d(\mathbf{x}^t,O_k)$ is continuous, as the composition of the distance function to a subset of $\mathbb{R}^2$ with the continuous flow $\mathbf{x}^t$, we can use the Intermediate Value Theorem to deduce that there exists a time $t_m \in (0,t_c)$ such that $d(\mathbf{x}^{t_m},O_k) = r + \epsilon$. From the choice of $\epsilon$ in \eqref{eq:epsilonchoice}, this implies that
\begin{equation}
d(\mathbf{x}^{t_m},O_k) < \frac{1}{2} \eta
\end{equation}
so that $d(\mathbf{x}^{t_m},\partial \mathcal{F}) = d(\mathbf{x}^{t_m},O_k) - r = \epsilon$, violating the assumption that $d(\mathbf{x}^t, \partial \mathcal{F}) < \epsilon$ for all $t>0$ and leading to a contradiction.
\end{proof}

\begin{proof}[Proof of Theorem \ref{theorem:wallfollowing}]
The fact that, under the wall following law in \eqref{eq:wallfollowinglaw}, the robot follows the boundary of a unique obstacle follows readily from Proposition \ref{proposition:nochangingobstacle}, the continuity of the flow and the positive invariance of $\left\{ \mathbf{p} \in \mathbb{R}^2 \, | \, d(\mathbf{p},\partial \mathcal{F}) < \epsilon \right\}$ as derived in Proposition \ref{proposition:wallfollowing_properties}. Finally, from \eqref{eq:wallfollowinglawsimplified}, notice that
\begin{equation}
\mathbf{u}(\mathbf{x}) \cdot \mathbf{t}_w(\mathbf{x}) = a\frac{\epsilon \sqrt{3}}{2}
\end{equation}
which implies that $|\sigma(\mathbf{x})| \in (0,1]$ and $\text{sign}(\sigma(\mathbf{x})) = \text{sign}(a)$. Since $\mathbf{u}(\mathbf{x}) \cdot \mathbf{t}_w(\mathbf{x})$ expresses the component of $\mathbf{u}(\mathbf{x})$ along the tangent to the obstacle boundary $\mathbf{t}_w(\mathbf{x})$, is always nonzero and does not change sign, we conclude that the robot will follow the boundary of the obstacle clockwise or counterclockwise, depending on $a$. The rest of the claims derive immediately from Proposition \ref{proposition:wallfollowing_properties}.
\end{proof}

The robot exits the wall following mode and returns to the path following mode once it encounters the path again, i.e when $\alpha^*=\max\{\alpha \in [0,1] \, | \, \mathcal{P}(\alpha) \in B\left(\mathbf{x},d(\mathbf{x},\partial \mathcal{F}) \right) \} > \alpha_s^*$. An immediate Corollary of Theorem \ref{theorem:wallfollowing}, along with path continuity of $\mathcal{P}$ and Assumptions \ref{assumption:obstacle_separation} and \ref{assumption:admissible_goals} is the following:
\begin{corollary}
If the robot enters the wall following mode, it will exit it in finite time and return to the path following mode.
\end{corollary}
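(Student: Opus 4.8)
The plan is to argue by contradiction. Suppose the robot enters wall following at time $t=0$, saving the path index $\alpha_s^*$, but the exit test $\alpha^*>\alpha_s^*$ with $\alpha^*$ as in \eqref{eq:maxalpha} never fires, so that the wall following law \eqref{eq:wallfollowinglaw} governs its motion for all $t\ge 0$. By \refthm{theorem:wallfollowing} --- together with \refprop{proposition:nochangingobstacle} and parts (iv)--(v) of \refprop{proposition:wallfollowing_properties} --- the robot then tracks the boundary of a single fixed obstacle $O_k\in\mathcal{O}$ (the one realizing $d(\mathbf{x}^0,\partial\mathcal{F})$) for all time, it remains inside the compact annular band $\mathcal{B}_k:=\{\mathbf{p}\in\mathcal{W}\,|\,r+\tfrac{\epsilon}{2}\le d(\mathbf{p},O_k)\le r+\epsilon\}$, its tangential velocity $\mathbf{u}(\mathbf{x}^t)\cdot\mathbf{t}_w(\mathbf{x}^t)=a\tfrac{\epsilon\sqrt{3}}{2}$ is a nonzero constant of fixed sign, and by \eqref{eq:wallfollowinglawsimplified} its speed $\|\mathbf{u}(\mathbf{x}^t)\|$ lies between two positive constants. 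The idea is to show that within a finite time the robot must come within $d(\mathbf{x},\partial\mathcal{F})$ of a not-yet-visited portion of $\mathcal{P}$, forcing $\alpha^*>\alpha_s^*$ and contradicting the hypothesis.

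The first step is finite-time circumnavigation: there is a finite $T^*>0$ such that over $[0,T^*]$ the outward obstacle normal $\mathbf{n}_w(\mathbf{x}^t)$ sweeps all of $\mathbb{S}^1$ (equivalently, the robot goes once around $O_k$). Since $O_k$ is convex and compact and the tangential component $\mathbf{u}\cdot\mathbf{t}_w$ never vanishes or changes sign, the metric projection $\Pi_{O_k}(\mathbf{x}^t)$ traverses $\partial O_k$ monotonically; together with the two-sided speed bound and the finiteness of the length of $\partial O_k$ and of its total turning $2\pi$, the radial projection of $\mathbf{x}^t$ onto the inner boundary $\Gamma:=\partial\{\mathbf{p}\,|\,d(\mathbf{p},O_k)\le r+\tfrac{\epsilon}{2}\}$ moves along the convex Jordan curve $\Gamma$ at a rate bounded away from zero, hence covers $\Gamma$ in finite time. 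This is essentially the classical ``bug algorithm'' circumnavigation argument \cite{choset_lynch_hutchinson_kantor_burgard_kavraki_thrun_2005}.

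The second step is to show $\mathcal{P}$ meets the patrol band $\mathcal{B}_k$ at a parameter past $\alpha_s^*$. At the trigger instant $\mathcal{P}(\alpha_s^*)\in B(\mathbf{x}^0,d(\mathbf{x}^0,\partial\mathcal{F}))$ with $d(\mathbf{x}^0,\partial\mathcal{F})<\epsilon$, so $\mathcal{P}$ reaches within $r+2\epsilon$ of $O_k$ near $\alpha_s^*$; on the other hand Assumptions \refas{assumption:obstacle_separation} and \refas{assumption:admissible_goals} force the terminal configuration $\mathcal{P}(1)$ to lie strictly outside the $(r+\epsilon)$-neighborhood of every obstacle, in particular of $O_k$. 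Choosing $\epsilon$ small enough (which is allowed by \eqref{eq:epsilonchoice}, an upper bound only), continuity of $\mathcal{P}$ and the intermediate value theorem applied to $\alpha\mapsto d(\mathcal{P}(\alpha),O_k)$ on $[\alpha_s^*,1]$ then yield $\hat\alpha\in(\alpha_s^*,1]$ with $\mathcal{P}(\hat\alpha)\in\mathcal{B}_k$. By the first step, within time $T^*$ the robot visits a position $\mathbf{x}^{t_1}$ with $\mathbf{n}_w(\mathbf{x}^{t_1})$ and $\Pi_{O_k}(\mathbf{x}^{t_1})$ equal to the corresponding quantities of $\mathcal{P}(\hat\alpha)$; then $\mathbf{x}^{t_1}$ and $\mathcal{P}(\hat\alpha)$ lie on a common ray out of $O_k$, both at obstacle-distance in $[r+\tfrac{\epsilon}{2},r+\epsilon]$, so $\|\mathbf{x}^{t_1}-\mathcal{P}(\hat\alpha)\|\le\tfrac{\epsilon}{2}\le d(\mathbf{x}^{t_1},\partial\mathcal{F})$, i.e. $\mathcal{P}(\hat\alpha)\in B(\mathbf{x}^{t_1},d(\mathbf{x}^{t_1},\partial\mathcal{F}))$. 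Hence \eqref{eq:maxalpha} gives $\alpha^*\ge\hat\alpha>\alpha_s^*$ at time $t_1\le T^*$: the exit test fires, contradicting the hypothesis, so wall following is left within finite time and path following resumes.

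The main obstacle is the finite-time circumnavigation step: upgrading ``nonzero, fixed-sign rate of progress plus bounded speed around a convex compact obstacle'' to ``one complete loop in finite time'' needs a uniform lower bound on angular progress, which is delicate at nonsmooth boundary points (rounded in bounded time because their exterior angles sum to $2\pi$) and is precisely the completeness guarantee underlying bug-type wall following. The remaining pieces --- the elementary band-geometry estimate $\|\mathbf{x}^{t_1}-\mathcal{P}(\hat\alpha)\|\le\epsilon/2$ and the use of continuity of $\mathcal{P}$ with Assumptions \refas{assumption:obstacle_separation} and \refas{assumption:admissible_goals} to push $\mathcal{P}$ back across $\mathcal{B}_k$ --- are routine once circumnavigation is established.
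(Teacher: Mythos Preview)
Your argument is correct and uses exactly the ingredients the paper itself invokes --- \refthm{theorem:wallfollowing}, continuity of $\mathcal{P}$, and Assumptions~\ref{assumption:obstacle_separation}--\ref{assumption:admissible_goals} --- but the paper offers no proof beyond that one-line assertion, so your contradiction scheme (finite-time circumnavigation from the constant tangential drift $a\epsilon\sqrt{3}/2$, an IVT crossing of the patrol band by $\mathcal{P}$ past $\alpha_s^*$, and the $\epsilon/2$ same-ray estimate) supplies precisely the details the paper leaves implicit. One small point to tighten: the IVT step tacitly assumes $d(\mathcal{P}(\alpha_s^*),O_k)\le r+\epsilon$, whereas the trigger only forces $d(\mathcal{P}(\alpha_s^*),O_k)<r+2\epsilon$; the edge case where $\mathcal{P}$ grazes $O_k$ just outside the band for all $\alpha\ge\alpha_s^*$ needs a separate (easy) treatment, and your remark about ``choosing $\epsilon$ small enough'' should be phrased as an a~priori design constraint rather than an in-proof choice.
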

% \begin{proof}
% This is an immediate consequence of the continuity of the reference path $\mathcal{P}$. From Assumption \ref{assumption:admissible_goals}, we know that $\mathcal{P}(1)$ lies outside of the set $\{\mathbf{p} \in \mathcal{W} \, | \, d(\mathbf{p},\partial \mathcal{F}) < \epsilon\}$. If the robot enters the wall following mode at a time $t$ where $d(\mathbf{x},\partial \mathcal{F}) < \epsilon$, it means that there exists an $\alpha_1 \in (0,1)$ such that $d(\mathcal{P}(\alpha_1),\partial \mathcal{F})$
% \end{proof}

Finally, since both the path following law \cite{arslan_kod_ICRA2017} and the wall following law generate continuously differentiable flows, we find it useful to explicitly state the following result, in the sense of sequential composition \cite{burridge-ijrr-1999}.
\begin{theorem} \label{theorem:flow}
In a workspace where Assumption \ref{assumption:obstacle_separation} is satisfied, any composition of path following and wall following phases generates a unique piecewise continuously differentiable flow for $\mathbf{x}$, defined for all future time.
\end{theorem}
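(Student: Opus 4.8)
The plan is to prove Theorem \ref{theorem:flow} by a sequential-composition argument, treating the combined motion as a sequence of ``phases'' separated by the switching events already defined in Section \ref{sec:reactive}: a switch from path following to wall following is triggered exactly when $d(\mathbf{x},\partial\mathcal{F})$ drops below $\epsilon$ (saving $\alpha_s^*$), and a switch back is triggered when $\alpha^* > \alpha_s^*$. Within each phase the flow is governed by a single vector field, so the first step is to record that each individual phase produces a unique continuously differentiable flow: for wall following this is exactly Proposition \ref{proposition:wallfollowing_properties}(ii) (or Theorem \ref{theorem:wallfollowing}), and for path following it is the corresponding result cited from \cite{arslan_kod_ICRA2017}, which applies because the projected-path goal $\mathcal{P}(\alpha^*)$ depends piecewise-$C^1$ (indeed Lipschitz) on $\mathbf{x}$ and $\mathcal{LF_L}(\mathbf{x})$ is convex. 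Concatenating finitely many such $C^1$ arcs yields a piecewise-$C^1$ curve that is globally continuous (the state $\mathbf{x}$ is carried over at each switch), which is precisely the claimed regularity; uniqueness of the concatenated flow follows from uniqueness within each phase together with the fact that the switching rule is a deterministic function of the state.

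The substantive content — and the place where Assumption \ref{assumption:obstacle_separation} is actually used — is showing the flow is defined for \emph{all} future time, i.e. that the switching sequence does not accumulate (no Zeno behaviour) and that the state never escapes $\mathcal{F}$. The second step is therefore to invoke positive invariance: Proposition \ref{proposition:wallfollowing_properties}(iv) gives that $\mathcal{F}$ is positively invariant under the wall following law, the path following law keeps $\mathbf{x}\in\accentset{\circ}{\mathcal{F}}$ by construction, and at each switching instant the state is unchanged, so $\mathcal{F}$ is positively invariant under the composed flow. Since $\mathcal{F}\subseteq\mathcal{W}$ is compact and the relevant vector fields are (globally) Lipschitz on it, no finite-time blowup can occur within a phase, so each phase has infinite time horizon unless terminated by a switch.

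The third step — the main obstacle — is to rule out an infinite number of switches in finite time. Here I would use the quantitative guarantees of Theorem \ref{theorem:wallfollowing}: once wall following begins, Proposition \ref{proposition:wallfollowing_properties}(v) confines the robot to the shell $\frac{\epsilon}{2} < d(\mathbf{x},\partial\mathcal{F}) < \epsilon$, and the rate of progress satisfies $|\sigma(\mathbf{x})|\in(0,1]$ with fixed sign, so the robot sweeps monotonically along a \emph{single} obstacle's boundary (Proposition \ref{proposition:nochangingobstacle}) at bounded-below tangential speed; combined with path continuity of $\mathcal{P}$ and Assumptions \ref{assumption:obstacle_separation}–\ref{assumption:admissible_goals} (the Corollary already stated), this means each wall following phase lasts a time bounded below by a positive constant and terminates, reacquiring the path at a strictly larger parameter value $\alpha^* > \alpha_s^*$. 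Since the saved indices $\alpha_s^*$ form a strictly increasing sequence in the compact interval $[0,1]$ and each path following phase between two wall following phases likewise consumes a nonzero stretch of path (the projected goal $\mathcal{P}(\alpha^*)$ advances at a bounded-below rate when $d(\mathbf{x},\partial\mathcal{F})\geq\epsilon$), only finitely many switches can occur on any finite time interval. Thus the concatenation is well-defined for all $t\geq 0$, and piecewise continuous differentiability is inherited from the finitely many $C^1$ arcs, completing the proof.

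Having assembled these pieces, I would write the argument compactly: (a) cite the per-phase existence/uniqueness/$C^1$ results; (b) note state continuity across switches and positive invariance of $\mathcal{F}$; (c) use the uniform progress bounds of Theorem \ref{theorem:wallfollowing} and the monotone advance of $\alpha^*$ to exclude switch accumulation; (d) conclude that the composed flow is unique, piecewise $C^1$, and complete. The only delicate point deserving care is the transition regularity exactly at a switch — the composed vector field is generally discontinuous there, so ``piecewise continuously differentiable'' must be read as $C^1$ on each closed phase interval with matching endpoint values of $\mathbf{x}$ (not of $\dot{\mathbf{x}}$), which is consistent with the sequential-composition framework of \cite{burridge-ijrr-1999} and with how the term is used for the wall following law itself in Proposition \ref{proposition:wallfollowing_properties}(i).
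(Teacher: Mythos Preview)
Your approach is consistent with the paper's, but considerably more detailed: the paper does not give a formal proof of Theorem~\ref{theorem:flow} at all. It simply prefaces the statement with the remark that ``both the path following law \cite{arslan_kod_ICRA2017} and the wall following law generate continuously differentiable flows'' and then asserts the theorem ``in the sense of sequential composition \cite{burridge-ijrr-1999}.'' In other words, the paper treats the result as an immediate corollary of the per-phase $C^1$ flows plus the sequential-composition framework, and leaves the concatenation argument implicit.

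Your proposal therefore subsumes the paper's reasoning (your step~(a) is essentially all the paper says) and goes further by explicitly addressing positive invariance, uniqueness under deterministic switching, and---most substantively---the exclusion of Zeno behaviour via the monotone advance of $\alpha^*$ and the uniform progress bounds from Theorem~\ref{theorem:wallfollowing}. These are genuine gaps in the paper's presentation that you have correctly identified and filled; the paper simply does not discuss switch accumulation. Your treatment is sound and the caveat you flag about discontinuity of $\dot{\mathbf{x}}$ at switches (piecewise $C^1$ meaning $C^1$ on each closed phase interval) is exactly the right reading.
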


\subsection{Extension to nonholonomic robots}
\label{subsec:reactiveplanner_nonholonomic}
As shown in \cite{arslan_kod_WAFR2016}, the preceding results can easily be extended for the case of a differential-drive robot driving towards a goal $\mathbf{x}^*$, whose dynamics are given in \eqref{eq:robotEOM}. Here, we will use a slightly different than \cite{arslan_kod_WAFR2016} control law since the robot possesses a gripper and must only move in the forward direction to grasp objects. The following inputs are used
\begin{align}
v & = \max \left\{-k \begin{bmatrix}
\cos\psi \\ \sin\psi
\end{bmatrix}^T \, \left(\mathbf{x}-\Pi_{\mathcal{LF}_v(\mathbf{x})}(\mathbf{x}^*) \right),0 \right \}\label{eq:law_nonholonomic_v} \\
\omega & = -k \, \text{atan2} \left( \beta_2,\beta_1 \right) \label{eq:law_nonholonomic_omega}
\end{align}
with
\begin{align}
\beta_1 & = \begin{bmatrix}
\cos\psi \\ \sin\psi
\end{bmatrix}^T \left(\mathbf{x}-\frac{\Pi_{\mathcal{LF}_\omega(\mathbf{x})}(\mathbf{x}^*) + \Pi_{\mathcal{LF_L}(\mathbf{x})}(\mathbf{x}^*)}{2}\right)\\
\beta_2 & = \begin{bmatrix}
-\sin\psi \\ \cos\psi
\end{bmatrix}^T \left(\mathbf{x}-\frac{\Pi_{\mathcal{LF}_\omega(\mathbf{x})}(\mathbf{x}^*) + \Pi_{\mathcal{LF_L}(\mathbf{x})}(\mathbf{x}^*)}{2}\right)
\end{align}
in order to constrain the robot motion to forward only and align with the desired target respectively. Here $\mathcal{LF}_v(\mathbf{x}), \mathcal{LF}_\omega(\mathbf{x})$ are used as in \cite{arslan_kod_WAFR2016}.

Based on the preceding analysis, for a differential drive robot, we will use $\mathbf{x}^* = \mathcal{P}(\alpha^*)$ (with $\alpha^*$ shown in \eqref{eq:maxalpha}) in the path following mode and $\mathbf{x}^* = \mathbf{x}_p(\mathbf{x})$ in the wall following mode. The following Theorem summarizes the qualitative properties of the wall following law for differential drive robots.
\begin{theorem} \label{theorem:wallfollowingunicycle}
With a selection of $\epsilon$ as in \eqref{eq:epsilonchoice}, the unicycle wall following law in \eqref{eq:law_nonholonomic_v}, \eqref{eq:law_nonholonomic_omega} with $\mathbf{x}^* = \mathbf{x}_p(\mathbf{x})$ as in \eqref{eq:wallfollowinggoal} leaves the robot's free space $\mathcal{F}$ positively invariant under its unique continuously differentiable flow, aligns the robot with $a \, \mathbf{t}_w(\mathbf{x})$ (according to the selection of a in \eqref{eq:wallfollowingdirection}) in finite time and steers the robot along the boundary of a unique obstacle in $\mathcal{O}$ in a clockwise or counterclockwise fashion (depending on $a$) with a nonzero rate of progress $\sigma$ afterwards, while maintaining a distance of at most $(r+\epsilon)$ from it.
\end{theorem}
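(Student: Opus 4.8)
The plan is to reprise the three-step template used for the holonomic wall following law --- Proposition \ref{proposition:wallfollowing_properties}, Proposition \ref{proposition:nochangingobstacle} and Theorem \ref{theorem:wallfollowing} --- but with each first-order navigation fact borrowed from \cite{arslan_kod_WAFR2016} replaced by its differential-drive counterpart from the same reference. First I would record the regularity of the closed-loop field: exactly as in the proof of Proposition \ref{proposition:wallfollowing_properties}(i), the metric projection onto the followed convex obstacle $O_j$ is piecewise continuously differentiable, hence so are $\rho_\mathbf{x}(\theta_m)$, $\mathbf{n}_w(\mathbf{x})$, $\mathbf{t}_w(\mathbf{x})$, the goal $\mathbf{x}_p(\mathbf{x})$ of \eqref{eq:wallfollowinggoal}, and the convex sets $\mathcal{LF}_v(\mathbf{x})$, $\mathcal{LF}_\omega(\mathbf{x})$, $\mathcal{LF_L}(\mathbf{x})$ together with the projections onto them; since $\max\{\cdot,0\}$ and $\mathrm{atan2}$ are piecewise continuously differentiable away from the origin, $v$ and $\omega$ in \eqref{eq:law_nonholonomic_v}--\eqref{eq:law_nonholonomic_omega} inherit this property, and, as in Proposition \ref{proposition:wallfollowing_properties}(ii), local Lipschitzness on the compact configuration space yields a unique continuously differentiable flow defined for all future time.

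Second I would establish the invariance statements. For positive invariance of $\mathcal{F}$ I would invoke the two features built into \eqref{eq:law_nonholonomic_v}--\eqref{eq:law_nonholonomic_omega}: the forward speed is clamped to be nonnegative, so the robot cannot back into an obstacle, and the effective goal $\mathbf{x}_p(\mathbf{x})$ lies in $\mathcal{LF}_w(\mathbf{x}) \subseteq \mathcal{LF_L}(\mathbf{x}) \subseteq \mathcal{F}$ by Lemma \ref{lemma:wallfollowing}, so the projections appearing in the control law land in collision-free convex sets --- this is precisely the situation in which the differential-drive controller of \cite{arslan_kod_WAFR2016} is proved never to cross $\partial \mathcal{F}$, and that argument transfers verbatim. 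For the distance bound I would mimic Proposition \ref{proposition:wallfollowing_properties}(v): a direct computation from \eqref{eq:wallfollowinggoal} gives $\|\mathbf{x}_p(\mathbf{x}) - \Pi_{O_j}(\mathbf{x})\|^2 = (r + \frac{\epsilon}{2})^2 + \frac{3}{4}\epsilon^2 < (r+\epsilon)^2$, so $d(\mathbf{x}_p(\mathbf{x}),O_j) < r+\epsilon$; since by Assumption \ref{assumption:obstacle_separation} and the choice \eqref{eq:epsilonchoice} the obstacle $O_j$ stays the unique nearest obstacle to every point within $\epsilon$ of $\mathbf{x}$, the effective goal always lies strictly inside $\{\mathbf{p} \, | \, d(\mathbf{p},\partial \mathcal{F}) < \epsilon\}$, and the nonnegative forward motion toward it keeps the robot inside that band. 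Once $d(\mathbf{x}^t,\partial \mathcal{F}) < \epsilon$ is known for all $t$, Proposition \ref{proposition:nochangingobstacle} applies word for word --- its proof uses only continuity of the flow and this bound --- so the followed obstacle is unique.

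Third, and this is the genuinely new part, I would prove finite-time alignment of $\psi$ with $a\,\mathbf{t}_w(\mathbf{x})$. The orientation equation \eqref{eq:law_nonholonomic_omega} is, by construction, proportional to minus the signed angle between the heading and the direction of the averaged projected goal, which near $O_j$ is essentially the direction of $\mathbf{x}_p(\mathbf{x}) - \mathbf{x}$; following the differential-drive alignment analysis of \cite{arslan_kod_WAFR2016}, this heading error is driven to zero in finite time, after which $v>0$ and $\dot{\mathbf{x}}$ points along $\mathbf{x}_p(\mathbf{x})-\mathbf{x}$. Writing $d := \rho_\mathbf{x}(\theta_m) - r = d(\mathbf{x},\partial \mathcal{F}) \in (0,\epsilon)$, equation \eqref{eq:wallfollowinggoal} gives $\mathbf{x}_p(\mathbf{x}) - \mathbf{x} = (\frac{\epsilon}{2} - d)\mathbf{n}_w(\mathbf{x}) + a\frac{\epsilon\sqrt{3}}{2}\mathbf{t}_w(\mathbf{x})$, which makes an angle of less than $\frac{\pi}{6}$ with $a\,\mathbf{t}_w(\mathbf{x})$ and is exactly parallel to it when $d = \frac{\epsilon}{2}$; as in the rate-of-progress computation of Theorem \ref{theorem:wallfollowing}, once the robot advances along a direction within $\frac{\pi}{6}$ of $a\,\mathbf{t}_w(\mathbf{x})$ the $\mathbf{n}_w$-component of $\dot{\mathbf{x}}$ drives $d$ to $\frac{\epsilon}{2}$, the goal converges to $a\frac{\epsilon\sqrt{3}}{2}\mathbf{t}_w(\mathbf{x})$ and the heading locks onto $a\,\mathbf{t}_w(\mathbf{x})$. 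Thereafter $\mathbf{u}(\mathbf{x}) \cdot \mathbf{t}_w(\mathbf{x}) = v > 0$ has constant sign $\mathrm{sign}(a)$, so $|\sigma(\mathbf{x})| \in (0,1]$ and $\mathrm{sign}(\sigma(\mathbf{x})) = \mathrm{sign}(a)$, i.e. the robot traverses $\partial O_j$ clockwise or counterclockwise according to \eqref{eq:wallfollowingdirection}, completing the proof.

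I expect the main obstacle to be the finite-time alignment together with the rigorous version of the band invariance: unlike the holonomic law, the instantaneous velocity $v(\cos\psi,\sin\psi)$ need not point toward $\mathbf{x}_p(\mathbf{x})$ during the transient, so controlling the sign of $\dot d$ before alignment --- and hence pinning down the averaged-projected-goal direction that appears in \eqref{eq:law_nonholonomic_omega} --- requires leaning on the precise definitions of $\mathcal{LF}_v$ and $\mathcal{LF}_\omega$ from \cite{arslan_kod_WAFR2016}; this transient is also the reason the two-sided distance bound of Theorem \ref{theorem:wallfollowing} weakens to the one-sided bound here.
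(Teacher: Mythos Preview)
Your proposal is correct and follows essentially the same approach as the paper's own proof sketch: regularity of the flow via Proposition~\ref{proposition:wallfollowing_properties}(i)--(ii), positive invariance of $\mathcal{F}$ from the differential-drive results in \cite{arslan_kod_WAFR2016} through the choice of $\mathcal{LF}_v,\mathcal{LF}_\omega$, band invariance by the selection of $\mathbf{x}_p(\mathbf{x})$ as in Proposition~\ref{proposition:wallfollowing_properties}(v), and finite-time alignment from the forward-only clamp \eqref{eq:law_nonholonomic_v} together with the angular law \eqref{eq:law_nonholonomic_omega}. Your write-up is in fact considerably more detailed than the paper's (which is explicitly only a sketch), and your closing paragraph correctly isolates the transient before alignment---where $\dot{\mathbf{x}}$ need not point toward $\mathbf{x}_p(\mathbf{x})$---as the delicate step and the reason the two-sided bound of Theorem~\ref{theorem:wallfollowing} weakens to one-sided here.
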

\begin{proof}[Proof sketch]
Positive invariance of $\mathcal{F}$ is guaranteed from \cite{arslan_kod_WAFR2016} with the particular choice of $\mathcal{LF}_v(\mathbf{x}), \mathcal{LF}_\omega(\mathbf{x})$. The existence, uniqueness and continuous differentiability of the flow are guaranteed through the piecewise continuous differentiability of the vector field, similarly to the proof of Proposition \ref{proposition:wallfollowing_properties}-(ii). We can also prove the positive invariance of the set $\{ \mathbf{p} \in \mathcal{W} \, | \, d(\mathbf{p}, \partial \mathcal{F}) < \epsilon\}$ by the particular selection of $\mathbf{x}_p(\mathbf{x})$, as in Proposition \ref{proposition:wallfollowing_properties}-(v). The only problem, unique to differential drive robots, is that the robot orientation might not originally be aligned with $\mathbf{x}_p(\mathbf{x})$. However, since the robot is not allowed to move backwards (from \eqref{eq:law_nonholonomic_v}), the angular control law in \eqref{eq:law_nonholonomic_omega} with $\mathbf{x}^* = \mathbf{x}_p(\mathbf{x})$ will force the robot to turn towards $\mathbf{x}_p(\mathbf{x})$ in finite time and continue following that direction onwards.
\end{proof}
We summarize the proposed method for switching between a path following and a wall following phase and generating velocity commands for a differential drive robot following a reference path $\mathcal{P}$ in Algorithm \ref{algorithm:velocity_commands_nonholonomic}, with the definition of an auxiliary symbolic action $\textsc{NavigateRobot}(\mathcal{P},r,\epsilon,\delta)$.

\begin{algorithm}
\begin{algorithmic}
\Function{NavigateRobot}{$\mathcal{P},r,\epsilon,\delta$}
\State $\texttt{mode} \gets \texttt{PathFollowing}$ \Comment{Initialize \texttt{mode}}
\Do
\State $\mathbf{x} \gets \text{Read Robot State}$
\State $\psi \gets \text{Read Robot Orientation}$
\State $\rho_\mathbf{x} \gets \text{Read LIDAR}$
\State $d \gets \min\limits_{\theta} \rho_\mathbf{x}(\theta) - r$
\State $\alpha^* \gets \text{Find maximum path index} $ \Comment{\eqref{eq:maxalpha}}
\If{$\texttt{mode}=\texttt{PathFollowing}$}
\State $\mathcal{LF_L}(\mathbf{x}) \gets \text{Find local free space}$ \Comment{\cite[(28)]{arslan_kod_WAFR2016}}
\State $\mathbf{x}^* \gets \mathcal{P}(\alpha^*)$
\If{$d<\epsilon$}
\State $\texttt{mode} \gets \texttt{WallFollowing}$
\State $\alpha_s^* \gets \alpha^*$
\State $a \gets \text{Find wall following direction}$ \Comment{\eqref{eq:wallfollowingdirection}}
\EndIf
\ElsIf{$\texttt{mode}=\texttt{WallFollowing}$}
\State $\theta_m \gets \arg \min\limits_\theta \rho_\mathbf{x}(\theta)$
\State $\mathbf{n}_w \gets -(\cos\theta_m,\sin\theta_m)$
\State $\mathbf{t}_w \gets (\sin\theta_m,-\cos\theta_m)$
\State $\mathbf{x}_\mathrm{offset} \gets \mathbf{x}-\left(\rho_\mathbf{x}(\theta_m)-r\right) \mathbf{n}_w$
\State $\mathbf{x}_p \gets \mathbf{x}_\mathrm{offset}+\frac{\epsilon}{2}\mathbf{n}_w+a\frac{\epsilon \sqrt{3}}{2}\mathbf{t}_w$
\State $\mathbf{x}^* \gets \mathbf{x}_p$
\If{$\alpha^* > \alpha_s^*$}
\State $\texttt{mode} \gets \texttt{PathFollowing}$
\EndIf
\EndIf
\State $v \gets \text{Find Linear Velocity command}$ \Comment{\eqref{eq:law_nonholonomic_v}}
\State $\omega \gets \text{Find Angular Velocity command}$ \Comment{\eqref{eq:law_nonholonomic_omega}}
\State $\mathbf{u}_{ku} \gets (v,\omega)$
\State $\textbf{COMMAND} \quad \mathbf{u}_{ku}$
\doWhile{$||\mathbf{x}-\mathcal{P}(1)||>\delta$}
\State $\textbf{return}$
\EndFunction
\end{algorithmic}
\caption{Generating velocity commands for a nonholonomic robot with radius $r$ following a reference path $\mathcal{P}$.} \label{algorithm:velocity_commands_nonholonomic}
\end{algorithm}

\section{REACTIVE PLANNING FOR GRIPPING CONTACT}
\label{sec:models}

In this Section, we describe a method for generating suitable motion commands online for two objects in contact, of which one is a differential drive robot and uses a gripper to push the other, passive object on the plane. Our method consists of generating ``virtual'' commands for different points of interest in the robot-object pair and translating them to ``actual'' commands for the robot using simple kinematic maps.

\subsection{Gripping contact kinematics}
\label{subsec:grippingcontactkinematics}

Consider the robot gripping an object $i$, as shown in Fig. \ref{fig:gripping_contact}. We can find the position of the object center of mass $\mathbf{x}_i \in \mathcal{W} $ from the position of the robot center of mass $\mathbf{x}$ as
\begin{equation}
\mathbf{x}_i := \mathbf{x} + (\rho_i+r) \, \mathbf{e}_\parallel \label{eq:objectposition}
\end{equation}
where $\phi_i = \text{atan2}(\mathbf{x}_i-\mathbf{x})$ and $\mathbf{e}_\parallel = (\cos\phi_i,\sin\phi_i) \in \mathbb{R}^2$ is the unit vector along the line connecting the two bodies. Since, the velocity of the object center of mass will be $\dot{\mathbf{x}}_i = \dot{\mathbf{x}} + (\rho_i+r) \, \dot{\phi}_i \, \mathbf{e}_\perp$ with $\mathbf{e}_\perp = (-\sin\phi_i,\cos\phi_i) \perp \mathbf{e}_\parallel$, and since the robot has a grip on the object along its line of motion, so that the orientation of the robot $\psi$ is always equal to the robot-object bearing angle $\phi_i$, we can use \eqref{eq:robotEOM} to write
\begin{equation}
\dot{\mathbf{x}}_i = \mathbf{T}_i \, \mathbf{u}_{ku} \label{eq:nonholonomictransformation}
\end{equation}
with the Jacobian $\mathbf{T}_i$ given by
\begin{equation}
\mathbf{T}_i = \begin{bmatrix}
\cos\psi & -(\rho_i+r)\sin\psi \\ \sin\psi & (\rho_i+r)\cos\psi
\end{bmatrix}
\end{equation}
and $\mathbf{u}_{ku}=(v,\omega)$ the input vector as defined above.

Similarly, consider the circumscribed circle enclosing the robot and the object with radius $(\rho_i + r)$, as shown in Fig. \ref{fig:gripping_contact}. Its \textit{center point} is located at
\begin{equation}
\mathbf{x}_{i,c} = \mathbf{x} + \rho_i \, \mathbf{e}_\parallel \label{eq:centerposition}
\end{equation}
Following a similar procedure as above, we can show that 
\begin{equation}
\dot{\mathbf{x}}_{i,c} = \mathbf{T}_{i,c} \, \mathbf{u}_{ku} \label{eq:nonholonomictransformationcenter}
\end{equation}
with the Jacobian $\mathbf{T}_{i,c}$ given by
\begin{equation}
\mathbf{T}_{i,c} = \begin{bmatrix}
\cos\psi & -\rho_i\sin\psi \\ \sin\psi & \rho_i\cos\psi
\end{bmatrix}
\end{equation}

\subsection{Generating virtual commands}
For the planning process, the fact that both $\mathbf{T}_i$ and $\mathbf{T}_{i,c}$ are always non-singular implies that we can describe the robot-object pair as either a dynamical system of the form
\begin{equation}
\dot{\mathbf{x}}_i = \mathbf{u}_i(\mathbf{x}_i) \label{eq:planningobject}
\end{equation}
or a dynamical system of the form
\begin{equation}
\dot{\mathbf{x}}_{i,c} = \mathbf{u}_{i,c}(\mathbf{x}_{i,c}) \label{eq:planningcenter}
\end{equation}
since we can always prescribe (virtual) arbitrary velocity commands $\mathbf{u}_i$ or $\mathbf{u}_{i,c}$ for either the object itself or for the center point and then translate them to (actual) inputs $\mathbf{u}_{ku}$ through \eqref{eq:nonholonomictransformation} or \eqref{eq:nonholonomictransformationcenter} respectively ($\mathbf{u}_{ku} = \mathbf{T}_i^{-1} \, \mathbf{u}_i$ or $\mathbf{u}_{ku} = \mathbf{T}_{i,c}^{-1} \, \mathbf{u}_{i,c}$).

Since the circumscribed circle centered at $\mathbf{x}_{i,c}$ is the smallest circle enclosing both the robot and the object and since Assumption \ref{assumption:obstacleseparation} guarantees only that $\eta > 2(r+\max_k \rho_k)$, we conclude that it is beneficial to consider the dynamical system \eqref{eq:planningcenter} (and generate virtual commands for the center point $\mathbf{x}_{i,c}$) when following the path $\mathcal{P}$ that the high-level planner provides. However, this will eventually position $\mathbf{x}_{i,c}$ to $\mathbf{p}_i^*$, instead of the object $\mathbf{x}_i$ (which is desired). Therefore, once the center point is placed to $\mathbf{p}_i^*$, we have to switch to the system \eqref{eq:planningobject} and generate virtual commands for the object $\mathbf{x}_i$ to carefully position it to $\mathbf{p}_i^*$. Assumption \ref{assumption:admissible_goals} guarantees that this is always possible. We can think of generating commands $\mathbf{u}_i$ and $\mathbf{u}_{i,c}$ as a trade-off between careful object positioning and agility in avoiding obstacles respectively.

\subsection{LIDAR Range transformation}

As described above, the robot-object pair is treated as a single holonomic agent with radius $\rho_i+r$ centered at $\mathbf{x}_{i,c}$ when following the reference path $\mathcal{P}$. However, we know that the LIDAR is positioned on the robot and its range measurements are given with respect to $\mathbf{x}$. Therefore, we need a mechanism for translating these measurements from $\mathbf{x}$ to $\mathbf{x}_{i,c}$. To this end, we describe the observed points from the LIDAR using the function $\mathbf{x}_\mathrm{LIDAR}:(-\pi,\pi] \rightarrow \mathcal{W}$
\begin{equation}
\mathbf{x}_\mathrm{LIDAR}(\theta) = \mathbf{x}+\rho_\mathbf{x}(\theta) \, (\cos\theta,\sin\theta)
\end{equation}
and find the equivalent ranges from $\mathbf{x}_{i,c}$ as
\begin{equation}
\rho_{\mathbf{x}_{i,c}}(\theta) = \min \{R-\rho_i,||\mathbf{x}_\mathrm{LIDAR}(\theta)-\mathbf{x}_{i,c}||\} \label{eq:lidarobjecttf}
\end{equation}
since $R-\rho_i$ is the minimum distance that can be observed from $\mathbf{x}_{i,c}$ when no obstacles are present and corresponds to the ray along the orientation $\psi$ of the robot towards the object.

We summarize the proposed algorithm for switching between a path following and a wall following phase and generating velocity commands for a robot-object pair following a reference path $\mathcal{P}$ in Algorithm \ref{algorithm:velocity_commands_robotobject}, with the definition of an auxiliary symbolic action $\textsc{NavigateRobotObject}(\mathcal{P},r,\rho_i,\epsilon,\delta)$.

\begin{algorithm}
\begin{algorithmic}
\Function{NavigateRobotObject}{$\mathcal{P},r,\rho_i,\epsilon,\delta$}
\State $\texttt{mode} \gets \texttt{PathFollowing}$ \Comment{Initialize \texttt{mode}}
\Do
\State $\mathbf{x} \gets \text{Read Robot State}$
\State $\psi \gets \text{Read Robot Orientation}$
\State $\rho_\mathbf{x} \gets \text{Read LIDAR}$
\State $\mathbf{x}_{i,c} \gets \text{Find center of circumscribed circle}$ \Comment{\eqref{eq:centerposition}}
\State $\rho_{\mathbf{x}_{i,c}} \gets \text{Transform LIDAR readings}$ \Comment{\eqref{eq:lidarobjecttf}}
\State $d \gets \min\limits_{\theta} \rho_{\mathbf{x}_{i,c}}(\theta) - (r+\rho_i)$
\State $\alpha^* \gets \text{Find maximum path index} $ \Comment{\eqref{eq:maxalpha}}
\If{$\texttt{mode}=\texttt{PathFollowing}$}
\State $\mathcal{LF_L}(\mathbf{x}_{i,c}) \gets \text{Find local free space}$ \Comment{\cite{arslan_kod_WAFR2016}}
\State $\mathbf{x}_{i,c}^* \gets \Pi_{\mathcal{LF_L}(\mathbf{x}_{i,c})}(\mathcal{P}(\alpha^*))$
\If{$d<\epsilon$}
\State $\texttt{mode} \gets \texttt{WallFollowing}$
\State $\alpha_s^* \gets \alpha^*$
\State $a \gets \text{Find wall following direction}$ \Comment{\eqref{eq:wallfollowingdirection}}
\EndIf
\ElsIf{$\texttt{mode}=\texttt{WallFollowing}$}
\State $\theta_m \gets \arg \min\limits_\theta \rho_{\mathbf{x}_{i,c}}(\theta)$
\State $\mathbf{n}_w \gets -(\cos\theta_m,\sin\theta_m)$
\State $\mathbf{t}_w \gets (\sin\theta_m,-\cos\theta_m)$
\State $\mathbf{x}_\mathrm{offset} \gets \mathbf{x}_{i,c}-\left(\rho_{\mathbf{x}_{i,c}}(\theta_m)-r-\rho_i\right) \mathbf{n}_w$
\State $\mathbf{x}_p \gets \mathbf{x}_\mathrm{offset}+\frac{\epsilon}{2}\mathbf{n}_w+a\frac{\epsilon \sqrt{3}}{2}\mathbf{t}_w$
\State $\mathbf{x}_{i,c}^* \gets \mathbf{x}_p$
\If{$\alpha^* > \alpha_s^*$}
\State $\texttt{mode} \gets \texttt{PathFollowing}$
\EndIf
\EndIf
\State $\mathbf{u}_{i,c} \gets -k(\mathbf{x}_{i,c}-\mathbf{x}_{i,c}^*)$ \Comment{Virtual commands}
\State $\mathbf{u}_{ku} \gets \mathbf{T}_{i,c}^{-1} \, \mathbf{u}_{i,c}$ \Comment{Actual commands}
\State $\textbf{COMMAND} \quad \mathbf{u}_{ku}$
\doWhile{$||\mathbf{x}_{i,c}-\mathcal{P}(1)||>r+\delta$}
\State $\textbf{return}$
\EndFunction
\end{algorithmic}
\caption{Generating velocity commands for a nonholonomic robot with radius $r$ moving an object of radius $\rho_i$ along a reference path $\mathcal{P}$.} \label{algorithm:velocity_commands_robotobject}
\end{algorithm}

\section{LOW-LEVEL IMPLEMENTATION OF SYMBOLIC LANGUAGE}
\label{sec:symboliclanguage}

In this section, we describe the low-level implementation and realization of the three symbolic actions introduced in Section \ref{sec:problemformulation}, used to solve our assembly problem.

\subsection{Action $\textsc{MoveToObject}$}
The low-level implementation of this symbolic action is quite straightforward, since the robot just needs to follow the plan provided by the high-level planner and navigate to a specific object using the auxiliary action $\textsc{NavigateRobot}$. The only caveat is that the robot needs to be aligned with the object it needs to pick up in order to engage the gripper. Since, no continuous law can guarantee both position and orientation convergence for a nonholonomically-constrained, differential drive robot \cite{Brockett-83} and a discontinuous law needs to be introduced, we compute
\begin{equation}
\tilde{\alpha} := \min\{ \alpha \in [0,1] \, | \, \mathcal{P}(\alpha) \in B(\mathbf{p}_i,\rho_i+r)\}
\end{equation}
and ``truncate'' the path to $\mathcal{P}([0,\tilde{\alpha}])$. In this way, the robot will navigate to $\mathcal{P}(\tilde{\alpha})$ (within a $\delta$ tolerance) which satisfies $||\mathcal{P}(\tilde{\alpha}) - \mathbf{p}_i||=\rho_i+r$ as desired. Then, in order to align the robot with the object, the linear command $v$ is set to zero and the angular command is set to
\begin{equation}
\omega = -k(\phi_i-\psi)
\end{equation}
until $\phi_i=\psi$. The low-level implementation is shown in Algorithm \ref{algorithm:movetoobject}.

\begin{algorithm}
\begin{algorithmic}[1]
\Function{MoveToObject}{$i,\mathcal{P}$}
\State $\epsilon \gets \text{Set Wall Following Tolerance}$ \Comment{$\epsilon<\eta$}
\State $\delta \gets \text{Set Placement Tolerance}$
\State $\tilde{\alpha} \gets \min\{ \alpha \in [0,1] \, | \, \mathcal{P}(\alpha) \in B(\mathbf{p}_i,\rho_i+r)\}$
\State $\textsc{NavigateRobot}(\mathcal{P}([0,\tilde{\alpha}]),r,\epsilon,\delta)$
\While{$|\phi_i-\psi|>\delta$}
\State $\mathbf{u}_{ku} \gets \left(0, -k(\phi_i-\psi)\right)$ \Comment{Align with object}
\State $\textbf{COMMAND} \quad \mathbf{u}_{ku}$
\EndWhile
\State $g \gets 1$ \Comment{Engage gripper}
\State $\textbf{return}$
\EndFunction
\end{algorithmic}
\caption{Robot navigation to object $\mathbf{p}_i$ along path $\mathcal{P}$}\label{algorithm:movetoobject}
\end{algorithm}

\subsection{Action $\textsc{PositionObject}$}
From the preceding analysis in Section \ref{sec:models} and using the auxiliary action $\textsc{NavigateRobotObject}$, we can construct the $\textsc{PositionObject}$ algorithm as shown in Algorithm \ref{algorithm:positionobject}. Since the task of $\textsc{NavigateRobotObject}$ is to bring the object close enough to the destination in order to allow careful positioning (allowed by Assumption \ref{assumption:admissible_goals}), a final positioning step is required. To this end, instead of generating virtual commands for the center of the circumscribed circle of the robot-object pair as shown in \eqref{eq:planningcenter}, we generate commands for the center of the object itself, as shown in \eqref{eq:planningobject}, according to the following law
\begin{equation}
\mathbf{u}_i = -k(\mathbf{x}_i-\mathbf{p}_i^*)
\end{equation}
These virtual commands are then translated to actual robot commands according to \eqref{eq:nonholonomictransformation}.

\begin{algorithm}
\begin{algorithmic}[1]
\Function{PositionObject}{$i,\mathcal{P}$}
\State $\epsilon \gets \text{Set Wall Following Tolerance}$ \Comment{$\epsilon<\eta$}
\State $\delta \gets \text{Set Placement Tolerance}$
\State $\textsc{NavigateRobotObject}(\mathcal{P},r,\rho_i,\epsilon,\delta)$
\Do
\State $\mathbf{x} \gets \text{Read Robot State}$
\State $\psi \gets \text{Read Robot Orientation}$
\State $\mathbf{x}_i \gets \text{Find object position}$ \Comment{\eqref{eq:objectposition}}
\State $\mathbf{u}_i \gets -k(\mathbf{x}_i-\mathbf{p}_i^*)$ \Comment{Virtual commands}
\State $\mathbf{u}_{ku} \gets \mathbf{T}_i^{-1} \, \mathbf{u}_i$ \Comment{Actual commands}
\State $\textbf{COMMAND} \quad \mathbf{u}_{ku}$
\doWhile{$||\mathbf{x}_i-\mathbf{p}_i^*||>\delta$}
\State $g \gets 0$ \Comment{Disengage gripper}
\State $\textbf{return}$
\EndFunction
\end{algorithmic}
\caption{Position object $i$ to $\mathbf{p}_i^*$ along path $\mathcal{P}$}\label{algorithm:positionobject}
\end{algorithm}

\subsection{Action $\textsc{Move}$}
This action is exactly like $\textsc{MoveToObject}$, but there is no final orientation requirement. Its low-level implementation is shown in Algorithm \ref{algorithm:move}.

\begin{algorithm}
\begin{algorithmic}[1]
\Function{Move}{$\mathcal{P}$}
\State $\epsilon \gets \text{Set Wall Following Tolerance}$ \Comment{$\epsilon<\eta$}
\State $\delta \gets \text{Set Placement Tolerance}$
\State $\textsc{NavigateRobot}(\mathcal{P},r,\epsilon,\delta)$
\State $\textbf{return}$
\EndFunction
\end{algorithmic}
\caption{Free robot navigation along path $\mathcal{P}$}\label{algorithm:move}
\end{algorithm}

Note here that the formal results accompanying both the path following phase \cite{arslan_kod_ICRA2017} and the wall following phase (Theorems \ref{theorem:wallfollowing} and \ref{theorem:wallfollowingunicycle}) along with Theorem \ref{theorem:flow} guarantee that every symbolic action command will be successfully executed.

\section{NUMERICAL EXAMPLES}
\label{sec:simulations}

In this Section, we provide numerical examples\footnote{All simulations were run in MATLAB using \texttt{ode45} and a gain $k=2$.} of assembly processes in various environments using the symbolic action commands described above.

\subsection{Environment packed circular obstacles}
First, we test our algorithm in a rectangular, 20x20m workspace, packed with circular obstacles, whose position and size are unknown to the deliberative planner. The minimum separation $\eta$ between the obstacles is chosen to be only slightly above (e.g 5cm) the minimum allowed value prescribed by Assumption \ref{assumption:obstacle_separation}, in order to demonstrate the validity of our approach, deriving from the formal guarantees of Theorem \ref{theorem:wallfollowing}. The goal is to place an object to a desired position, shown in \ref{fig:packed_assembly}. The deliberative planner outputs a plan comprising of two actions: $\textsc{MoveToObject}(1,\mathcal{P}_1) \rightarrow \textsc{PositionObject}(1,\mathcal{P}_2)$, whose sequential execution and the corresponding reference paths $\mathcal{P}_1,\mathcal{P}_2$ are depicted in Fig. \ref{fig:packed_assembly}.

\begin{figure}[t]
\centering
\begin{subfigure}[t]{0.5\columnwidth}
\includegraphics[width=1.\columnwidth]{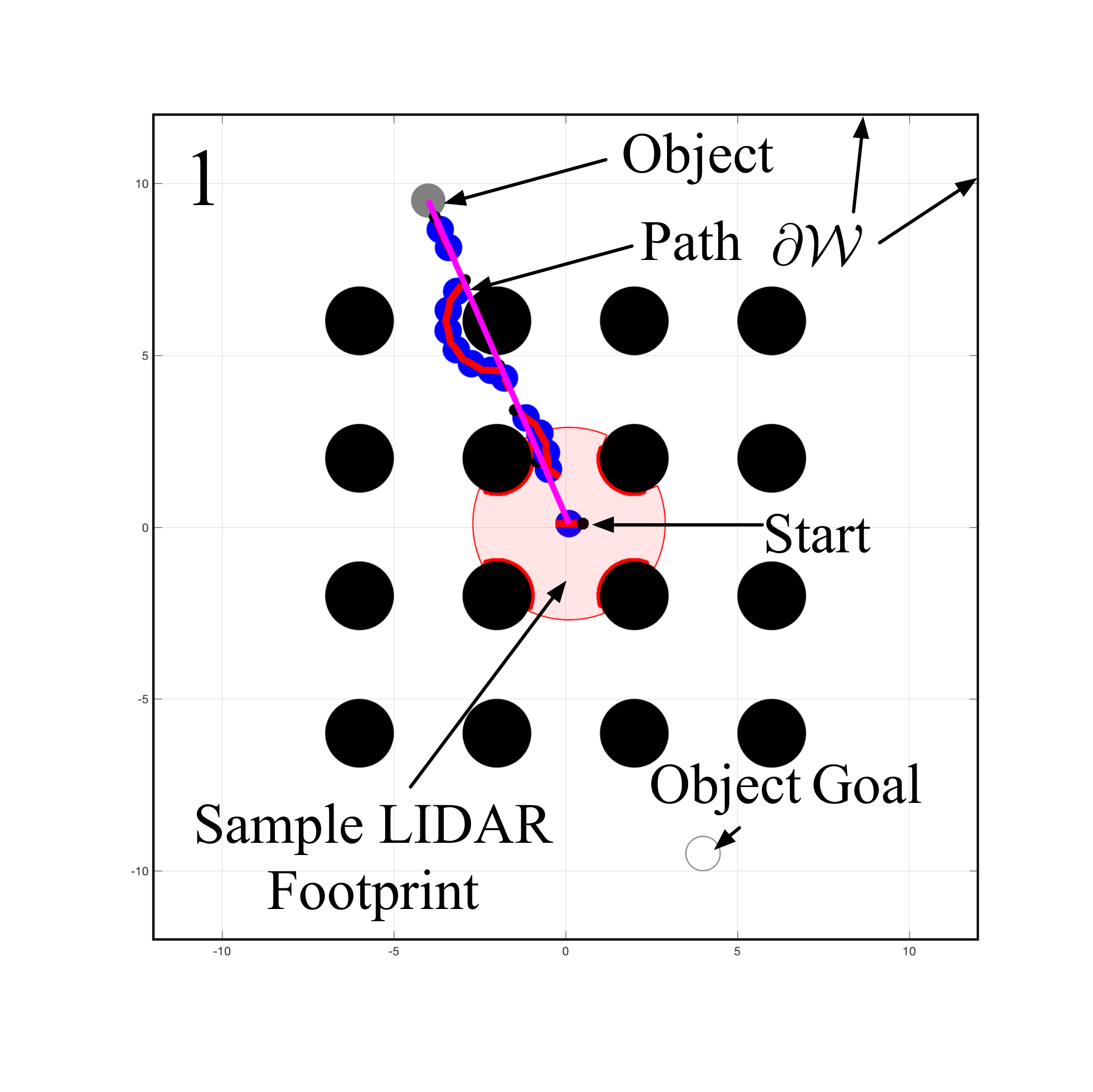}
\end{subfigure}%
\begin{subfigure}[t]{0.5\columnwidth}
\includegraphics[width=1.\columnwidth]{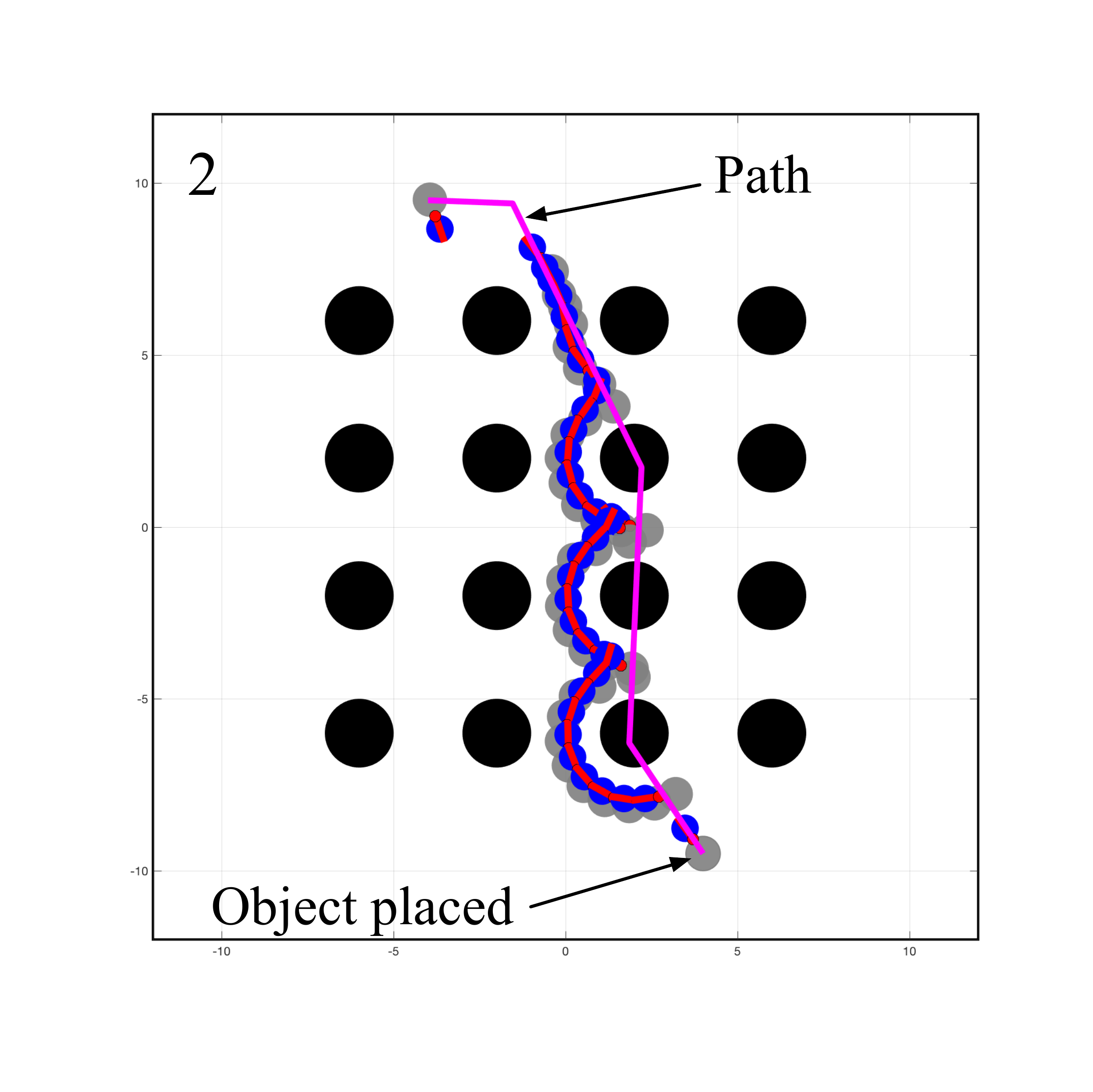}
\end{subfigure}
\caption{A depiction of a packed two stage assembly process with a fixed timestep, with the separation value just above the minimum allowed value.}
\label{fig:packed_assembly}
\end{figure}

\begin{figure}
\centering
\begin{subfigure}{0.5\columnwidth}
\includegraphics[width=1.\columnwidth]{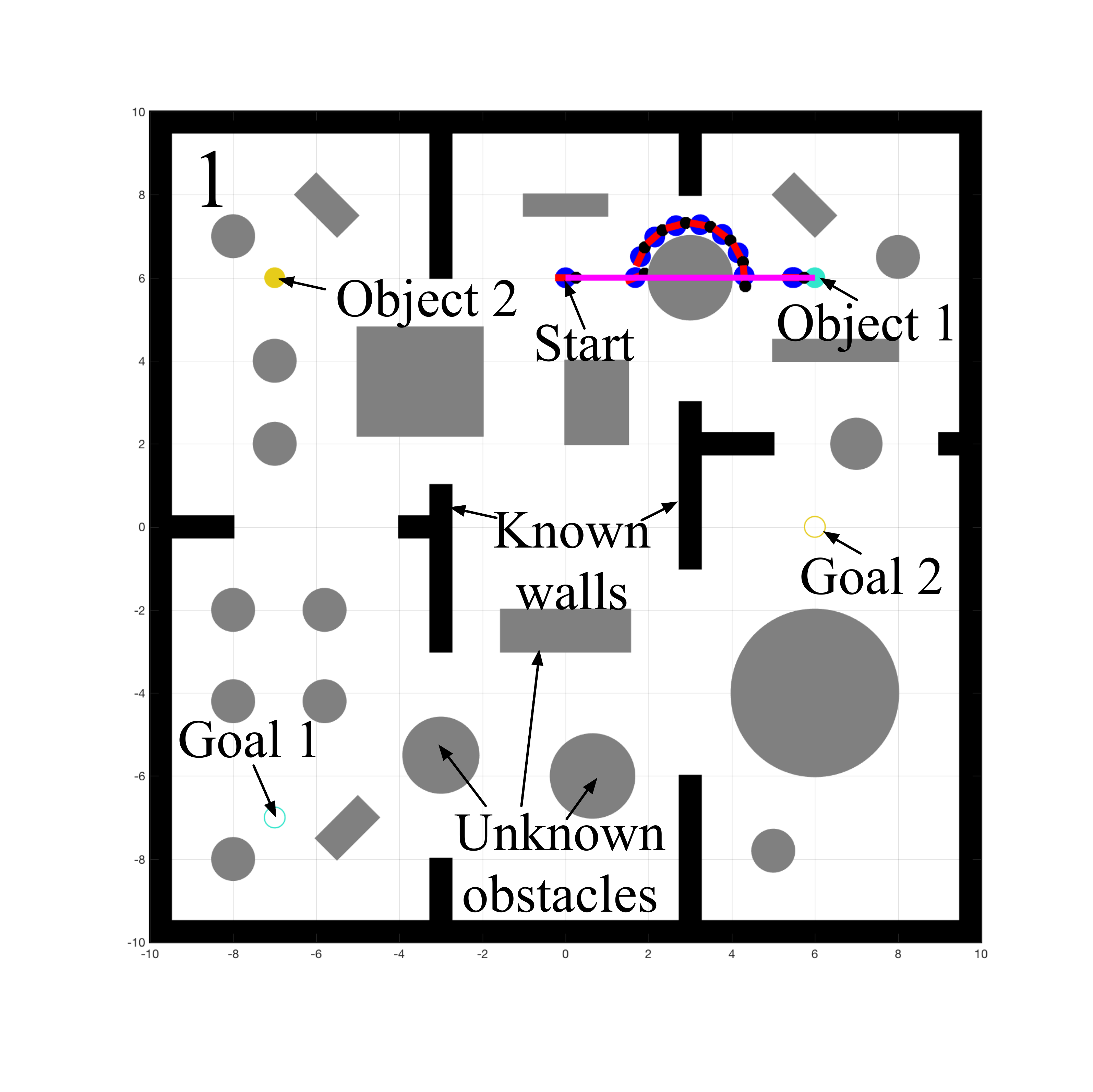}
\end{subfigure}%
\begin{subfigure}{0.5\columnwidth}
\includegraphics[width=1.\columnwidth]{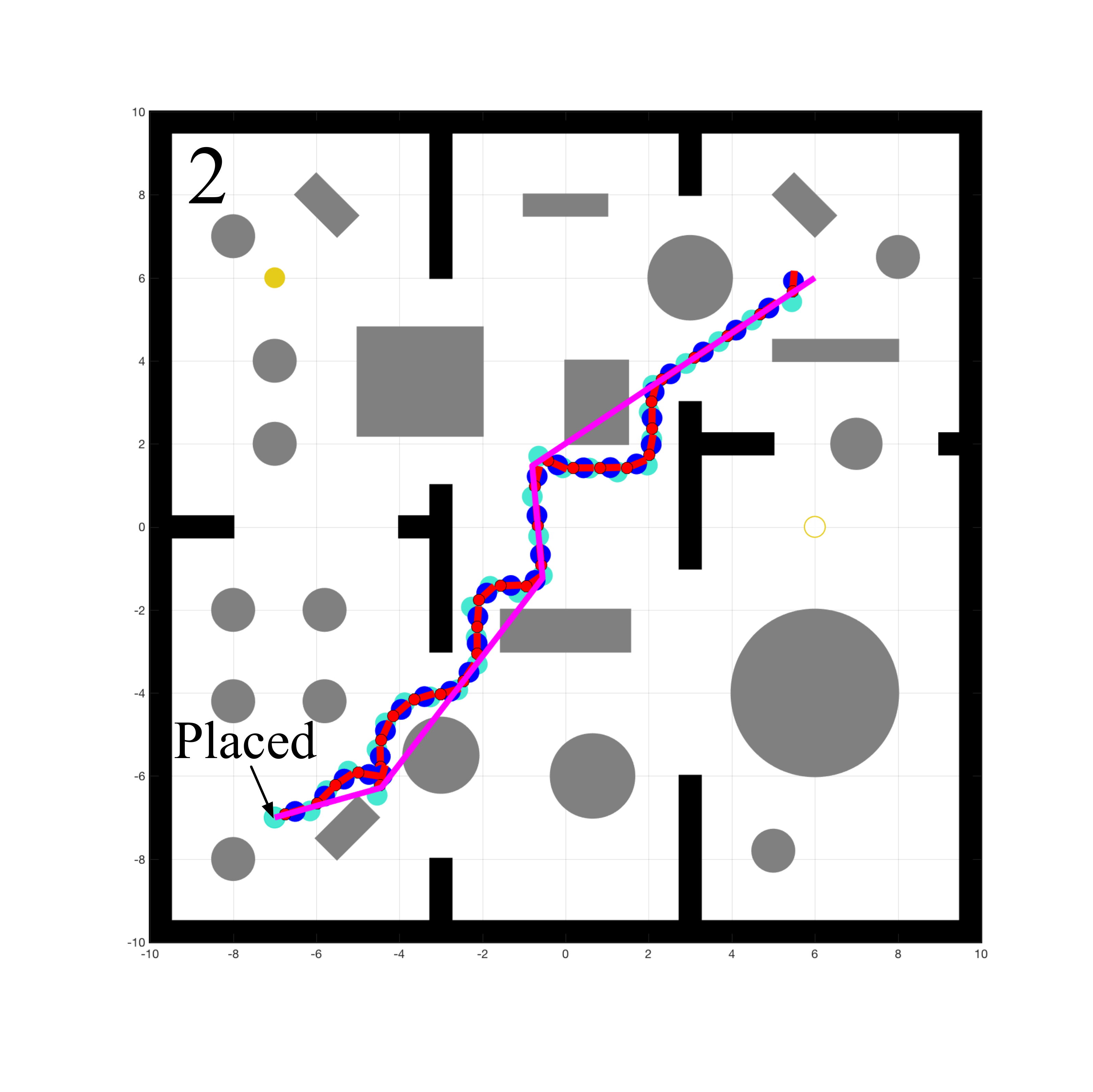}
\end{subfigure}
\begin{subfigure}{0.5\columnwidth}
\includegraphics[width=1.\columnwidth]{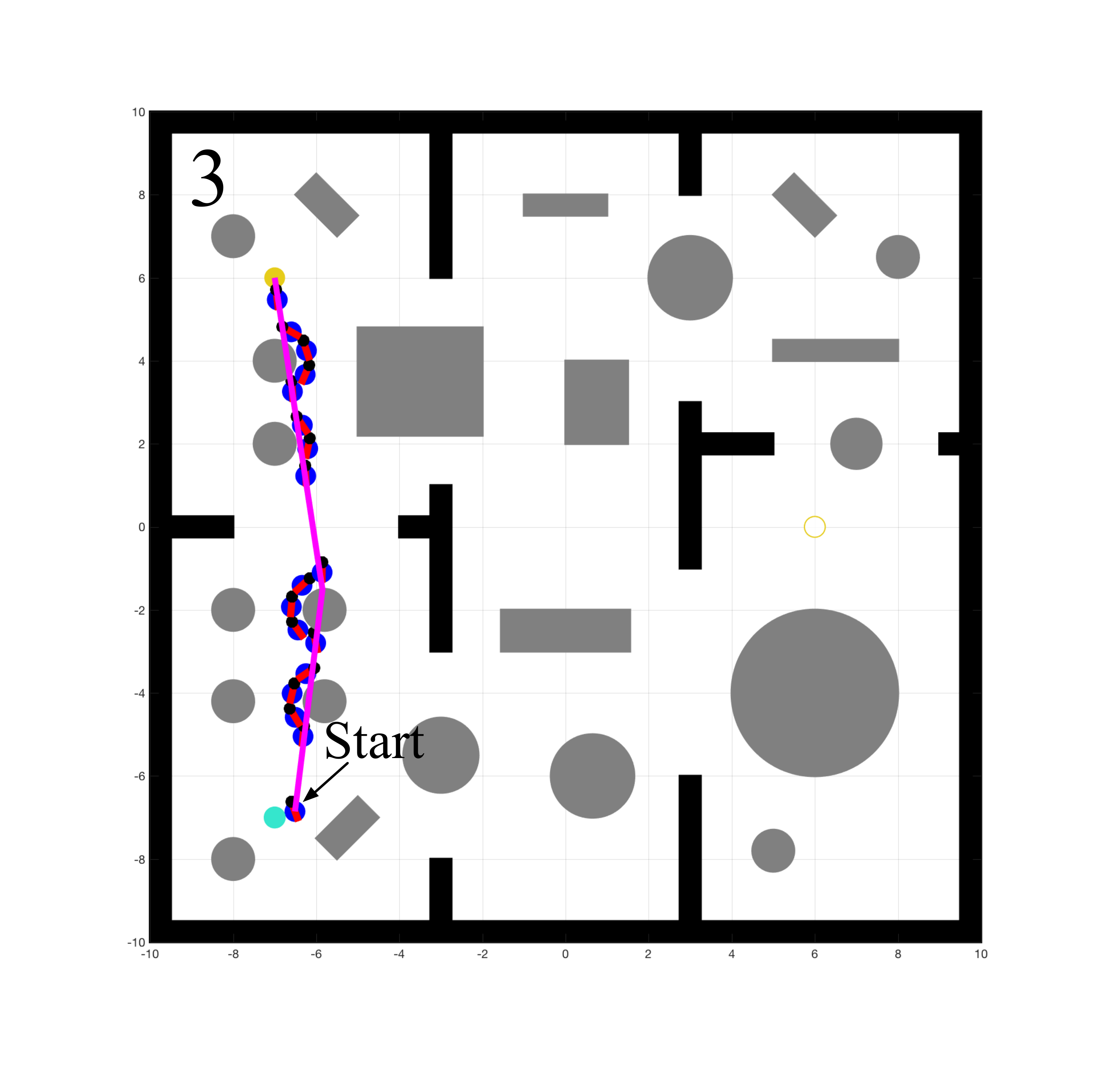}
\end{subfigure}%
\begin{subfigure}{0.5\columnwidth}
\includegraphics[width=1.\columnwidth]{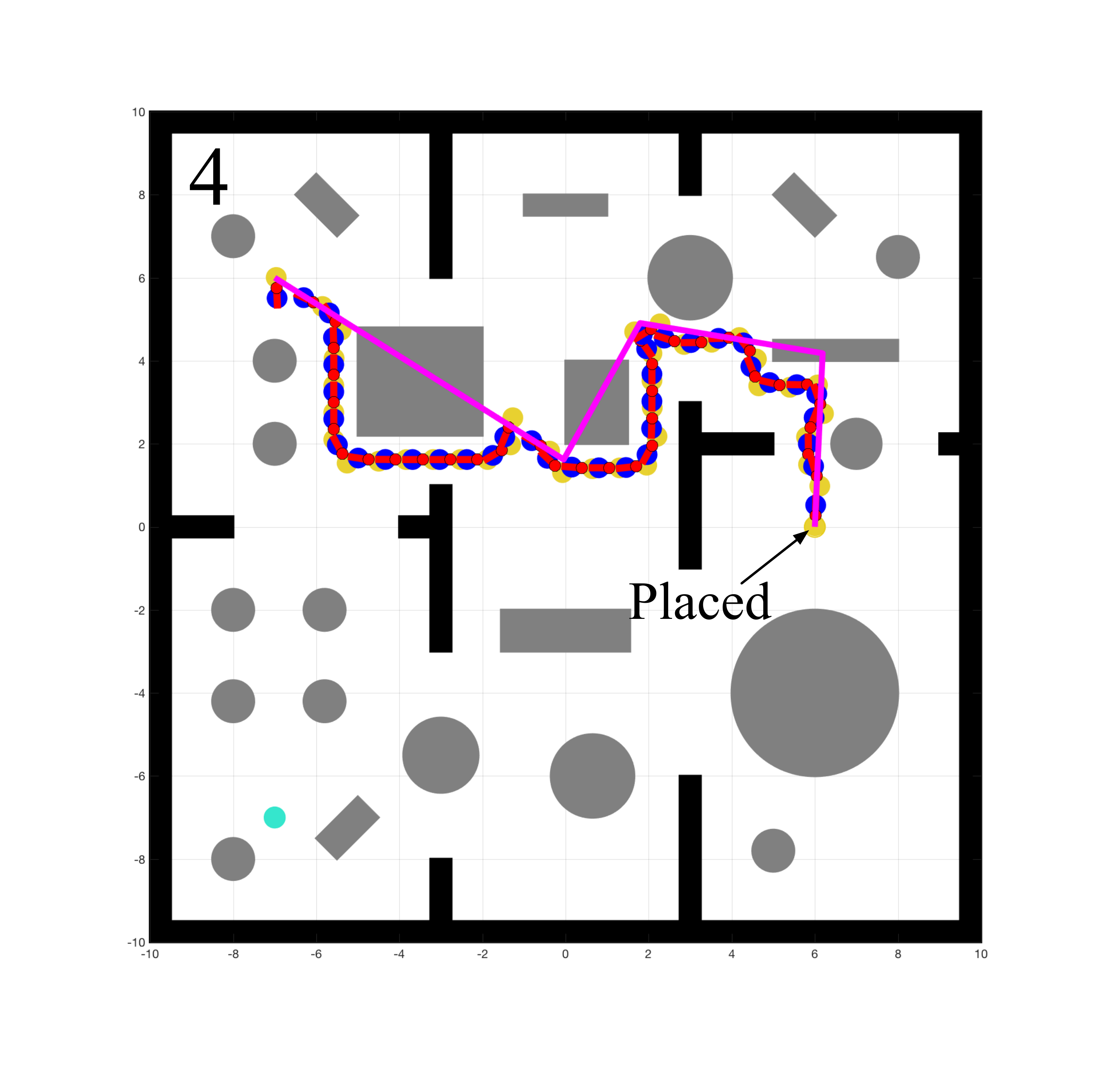}
\end{subfigure}
\begin{subfigure}{0.5\columnwidth}
\includegraphics[width=1.\columnwidth]{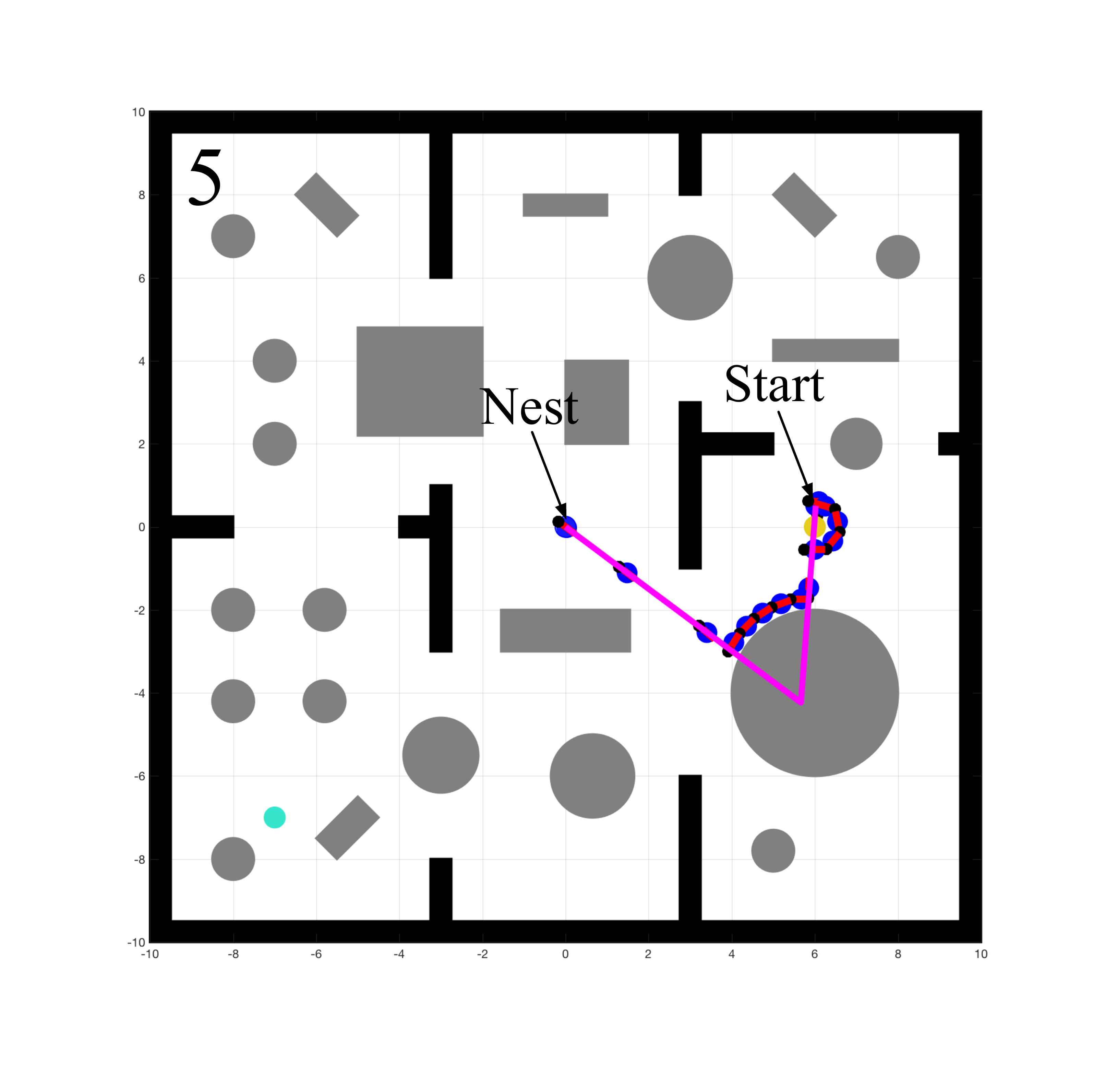}
\end{subfigure}%
\begin{subfigure}{0.5\columnwidth}
\includegraphics[width=1.\columnwidth]{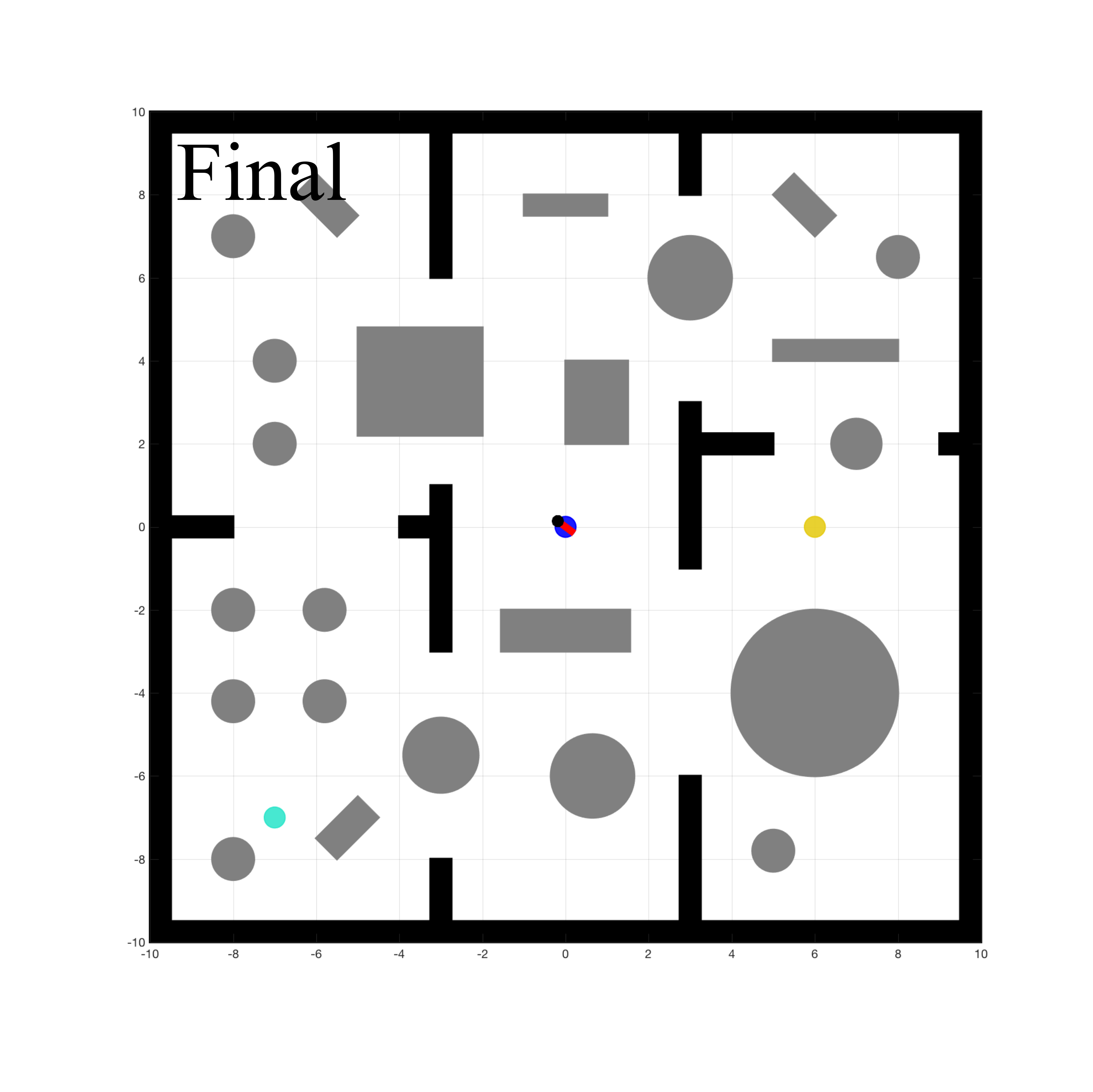}
\end{subfigure}
\caption{An illustration of the assembly process described in Section \ref{subsec:cluttered_environment}, with a fixed timestep. The walls and boundaries of the workspace, known to the deliberative planner, are shown in black and the unexpected obstacles handled by the reactive planner are shown in grey.}
\label{fig:assembly_walls}
\end{figure}

\subsection{Cluttered environment with walls}
\label{subsec:cluttered_environment}

In this paragraph, we demonstrate the execution of a more challenging task. The robot should position the two obstacles depicted in Fig. \ref{fig:assembly_walls} to their predefined positions within a polygonal workspace with walls, whose locations are provided a-priori to the deliberative planner, and then return to a ``nest'' location. The workspace is packed with several convex, not-necessarily circular obstacles. As shown in Fig. \ref{fig:assembly_walls}, the deliberative planner outputs a high-level plan comprising of five actions: $\textsc{MoveToObject}(1,\mathcal{P}_1) \rightarrow \textsc{PositionObject}(1,\mathcal{P}_2) \rightarrow \textsc{MoveToObject}(2,\mathcal{P}_3) \rightarrow \textsc{PositionObject}(2,\mathcal{P}_4) \rightarrow \textsc{Move}(\mathcal{P}_5)$, which is successfully executed by the reactive planner. An example for an object trajectory during this execution is shown in Fig. \ref{fig:environment}. Notice that, in contrast with several reactive wall following schemes that require an estimate of the wall curvature, our scheme can easily handle obstacles with corners. It is also worth noting that the deliberative planner hit the maximum number of expansions allowed and had difficulties extracting a feasible plan when it was provided the exact position and size of every obstacle, due to the highly packed construction. This highlights another benefit of our approach; we can significantly reduce the computational load of high-level planners by tasking them only with the extraction of the action sequence required, and using the reactive planner for local obstacle avoidance and convergence online. This happens because the computational load of the reactive planner remains the same regardless of the number of obstacles. 

Finally, it is worth noting that the proposed scheme is capable of executing a sequence of symbolic commands provided by the deliberative planner, even when Assumptions \ref{assumption:obstacle_separation} or \ref{assumption:admissible_goals} or the obstacle convexity are not satisfied. In the accompanying video, we provide examples of successful assemblies even in the absence of both obstacle convexity and enough separation.

\section{CONCLUSION AND FUTURE WORK}
\label{sec:conclusion}

This paper presents a provably correct architecture for planning and executing a successful solution to an instance of the Warehouseman's problem by decomposition into an offline ``deliberative'' planning module and an online ``reactive'' execution module. A differential drive robot equipped with a gripper and a LIDAR sensor, capable of perceiving its environment only locally, is used to successfully position the passive objects in a desired configuration with no collisions along the way. Formal proofs and numerical results demonstrate the validity of our method.

Future work will address the problem of closely integrating the reactive and deliberative planners and shifting more of the planning burden to the reactive component, allowing the deliberative planner to consider more complex dynamics and generate paths faster and more efficiently. Moreover, we plan to experimentally verify the reactive planning ideas for pushing a grasped object, as presented in Section \ref{sec:models}.

\addtolength{\textheight}{-12cm}  

%%%%%%%%%%%%%%%%%%%%%%%%%%%%%%%%%%%%%%%%%%%%%%%%%%%%%%%%%%%%%%%%%%%%%%%%%%%%%%%%

%%%%%%%%%%%%%%%%%%%%%%%%%%%%%%%%%%%%%%%%%%%%%%%%%%%%%%%%%%%%%%%%%%%%%%%%%%%%%%%%

%%%%%%%%%%%%%%%%%%%%%%%%%%%%%%%%%%%%%%%%%%%%%%%%%%%%%%%%%%%%%%%%%%%%%%%%%%%%%%%%

\section*{ACKNOWLEDGMENT}
This work was supported in part by the ARL/GDRS RCTA project, Coop. Agreement \#W911NF-10–2−0016 and in part by AFRL grant FA865015D1845 (subcontract 669737–1).

%%%%%%%%%%%%%%%%%%%%%%%%%%%%%%%%%%%%%%%%%%%%%%%%%%%%%%%%%%%%%%%%%%%%%%%%%%%%%%%%

\bibliographystyle{IEEEtran}
\bibliography{IEEEabrv,references}

\begin{thebibliography}{10}
\providecommand{\url}[1]{#1}
\csname url@rmstyle\endcsname
\providecommand{\newblock}{\relax}
\providecommand{\bibinfo}[2]{#2}
\providecommand\BIBentrySTDinterwordspacing{\spaceskip=0pt\relax}
\providecommand\BIBentryALTinterwordstretchfactor{4}
\providecommand\BIBentryALTinterwordspacing{\spaceskip=\fontdimen2\font plus
\BIBentryALTinterwordstretchfactor\fontdimen3\font minus
  \fontdimen4\font\relax}
\providecommand\BIBforeignlanguage[2]{{%
\expandafter\ifx\csname l@#1\endcsname\relax
\typeout{** WARNING: IEEEtran.bst: No hyphenation pattern has been}%
\typeout{** loaded for the language `#1'. Using the pattern for}%
\typeout{** the default language instead.}%
\else
\language=\csname l@#1\endcsname
\fi
#2}}

\bibitem{Hopcroft_Schwartz_Sharir_1984}
\BIBentryALTinterwordspacing
J.~E. Hopcroft, J.~T. Schwartz, and M.~Sharir, ``On the complexity of motion
  planning for multiple independent objects; pspace-hardness of the"
  warehouseman’s problem",'' \emph{The International Journal of Robotics
  Research}, vol.~3, no.~4, p. 76–88, 1984. [Online]. Available:
  \url{http://ijr.sagepub.com/content/3/4/76.short}
\BIBentrySTDinterwordspacing

\bibitem{Wolfe2010}
J.~Wolfe, B.~Marthi, and S.~Russell, ``Combined task and motion planning for
  mobile manipulation.'' in \emph{Proceedings of the International Conference
  on Automated Planning and Scheduling}, 2010.

\bibitem{Berenson2009}
D.~Berenson, S.~Srinivasa, D.~Ferguson, and J.~Kuffner, ``Manipulation planning
  on constraint manifolds,'' in \emph{Proceedings of the IEEE International
  Conference on Robotics and Automation (ICRA)}, 2009.

\bibitem{Konidaris2014}
G.~Konidaris, L.~Kaelbling, and T.~Lozano-Perez, ``Constructing symbolic
  representations for high-level planning,'' in \emph{proc. AAAI}, 2014.

\bibitem{Kaelbling2011}
L.~P. Kaelbling and T.~Lozano-Perez, ``Hierarchical task and motion planning in
  the now.'' in \emph{Proceedings of the IEEE International Conference on
  Robotics and Automation}, 2011.

\bibitem{WAFR16WRVB}
\BIBentryALTinterwordspacing
W.~Vega-Brown and N.~Roy, ``Asymptotically optimal planning under
  piecewise-analytic constraints,'' in \emph{Proceedings of the Workshop on the
  Algorithmic Foundations of Robotics}, San Francisco, CA, 2016. [Online].
  Available:
  \url{https://groups.csail.mit.edu/rrg/papers/vegabrown2016asymptotically.pdf}
\BIBentrySTDinterwordspacing

\bibitem{Vendittelli_Laumond_Mishra_2015}
\BIBentryALTinterwordspacing
M.~Vendittelli, J.-P. Laumond, and B.~Mishra, \emph{Decidability of Robot
  Manipulation Planning: Three Disks in the Plane}, ser. Springer Tracts in
  Advanced Robotics.\hskip 1em plus 0.5em minus 0.4em\relax Springer, Cham,
  2015, p. 641–657. [Online]. Available:
  \url{https://link.springer.com/chapter/10.1007/978-3-319-16595-0_37}
\BIBentrySTDinterwordspacing

\bibitem{Deshpande_Kaelbling_Lozano-Perez_2016}
\BIBentryALTinterwordspacing
A.~Deshpande, L.~P. Kaelbling, and T.~Lozano-Perez, \emph{Decidability of
  semi-holonomic prehensile task and motion planning}, 2016. [Online].
  Available: \url{http://lis.csail.mit.edu/pubs/deshpande-WAFR16.pdf}
\BIBentrySTDinterwordspacing

\bibitem{Chatila_1995}
\BIBentryALTinterwordspacing
R.~Chatila, ``Deliberation and reactivity in autonomous mobile robots,''
  \emph{Robotics and Autonomous Systems}, vol.~16, no.~2, p. 197–211, Dec
  1995. [Online]. Available:
  \url{http://www.sciencedirect.com/science/article/pii/0921889096810098}
\BIBentrySTDinterwordspacing

\bibitem{Koditschek_1994}
D.~E. Koditschek, ``An approach to autonomous robot assembly,''
  \emph{Robotica(Cambridge. Print)}, vol.~12, p. 137–155, 1994.

\bibitem{Bozma_Koditschek_2001}
H.~Isil~Bozma and D.~E. Koditschek, ``Assembly as a noncooperative game of its
  pieces: analysis of 1d sphere assemblies,'' \emph{Robotica}, vol.~19, no.~01,
  p. 93–108, 2001.

\bibitem{Karagoz_Bozma_Koditschek_2004}
C.~S. Karagoz, H.~I. Bozma, and D.~E. Koditschek, ``Feedback-based event-driven
  parts moving,'' \emph{Robotics, IEEE Transactions on [see also Robotics and
  Automation, IEEE Transactions on]}, vol.~20, no.~6, p. 1012–1018, 2004.

\bibitem{marthi2008angelic}
B.~Marthi, S.~Russell, and J.~Wolfe, ``Angelic hierarchical planning: Optimal
  and online algorithms,'' in \emph{ICAPS}, 2008.

\bibitem{vegabrown2017optimal}
\BIBentryALTinterwordspacing
W.~Vega-Brown and N.~Roy, ``Optimal and near-optimal task and motion planning
  with abstraction,'' 2017. [Online]. Available:
  \url{https://groups.csail.mit.edu/rrg/papers/vegabrown2017optimal.pdf}
\BIBentrySTDinterwordspacing

\bibitem{koditschek-aam-1990}
D.~Koditschek and E.~Rimon, ``Robot navigation functions on manifolds with
  boundary,'' \emph{Advances in Applied Mathematics}, vol.~11, no.~4, pp.
  412--442, 1990.

\bibitem{arslan_kod_ICRA2016B}
O.~Arslan and D.~E. Koditschek, ``Exact robot navigation using power
  diagrams,'' in \emph{Robotics and Automation, 2016 IEEE International
  Conference on}, 2016, pp. 1--8.

\bibitem{arslan_kod_WAFR2016}
------, ``Sensor-based reactive navigation in unknown convex sphere worlds,''
  in \emph{The 12th International Workshop on the Algorithmic Foundations of
  Robotics}, 2016.

\bibitem{arslan_kod_ICRA2017}
------, ``Smooth extensions of feedback motion planners via reference
  governors,'' in \emph{Robotics and Automation, 2017 IEEE International
  Conference on}, 2017(accepted).

\bibitem{choset_lynch_hutchinson_kantor_burgard_kavraki_thrun_2005}
H.~Choset, K.~M. Lynch, S.~Hutchinson, G.~Kantor, W.~Burgard, L.~Kavraki, and
  S.~Thrun, \emph{Principles of robot motion: theory, algorithms, and
  implementations.}\hskip 1em plus 0.5em minus 0.4em\relax MIT Press,
  Cambridge, MA, 2005.

\bibitem{Kuntz-1994}
L.~Kuntz and S.~Scholtes, ``Structural analysis of nonsmooth mappings, inverse
  functions, and metric projections,'' \emph{Journal of Mathematical Analysis
  and Applications}, vol. 188, no.~2, pp. 346 -- 386, 1994.

\bibitem{shapiro-1988}
A.~Shapiro, ``Sensitivity analysis of nonlinear programs and differentiability
  properties of metric projections,'' \emph{SIAM Journal on Control and
  Optimization}, vol.~26, no.~3, pp. 628--645, 1988.

\bibitem{chaney-1990}
R.~W. Chaney, ``Piecewise ck functions in nonsmooth analysis,'' \emph{Nonlinear
  Analysis: Theory, Methods \& Applications}, vol.~15, no.~7, pp. 649 -- 660,
  1990.

\bibitem{burridge-ijrr-1999}
R.~Burridge, A.~Rizzi, and D.~Koditschek, ``Sequential composition of
  dynamically dexterous robot behaviors,'' \emph{The International Journal of
  Robotics Research}, vol.~18, pp. 534--555, 1999.

\bibitem{Brockett-83}
R.~W. Brockett, ``Asymptotic stability and feedback stabilization,'' in
  \emph{Differential Geometric Control Theory}.\hskip 1em plus 0.5em minus
  0.4em\relax Birkhauser, 1983, pp. 181--191.

\end{thebibliography}

\end{document}